\newcommand{\R}{\ensuremath{\mathbb{R}}}
\DeclareMathOperator{\E}{\mathbb{E}}
\newcommand{\op}{\ensuremath{\mathsf{OP}}\xspace}
\theoremstyle{theorem}
\newtheorem{theorem}{Theorem}
\theoremstyle{definition}
\renewenvironment{proof}{{\bf Proof:}}{\hfill\rule{2mm}{2mm}}
\newcommand{\wh}{\widehat}
\renewcommand{\op}{\operatorname}
\title{Deep Multi-Modal Structural Equations For Causal Effect Estimation With Unstructured Proxies}
\author{
  Shachi Deshpande$^{1, 2}$,  Kaiwen Wang$^{1, 2}$, Dhruv Sreenivas$^{2}$, Zheng Li$^{1, 2}$, Volodymyr Kuleshov$^{1, 2}$ \\
   Department of Computer Science, Cornell Tech$^{1}$ and Cornell University$^{2}$\\
  \texttt{\string{ssd86, kw437, ds844, zl634, kuleshov\string}@cornell.edu}\\
  }
\date{%
    $^1$Organization 1\\%
    $^2$Organization 2\\[2ex]%
    \today
}
\begin{document}

\maketitle

\begin{abstract}
Estimating the effect of an intervention 
from observational data
while accounting for confounding variables
is a key task in causal inference. 
%
%
Oftentimes, 
the confounders are unobserved, but we have access to large amounts of 
additional
unstructured data (images, text) that contain valuable proxy signal about the missing confounders.
This paper argues that leveraging this unstructured data 
can greatly improve
the accuracy of causal effect estimation. 
Specifically, we introduce deep multi-modal structural equations, a generative model
for causal effect estimation
in which 
confounders are latent variables
and unstructured data are proxy variables.
This model supports multiple multi-modal proxies (images, text) as well as missing data.
We empirically demonstrate that our approach outperforms existing methods based on propensity scores and corrects for  confounding using unstructured inputs on tasks in genomics and healthcare.
Our methods can potentially support the use of large amounts of  data that were previously not used in causal inference.
\end{abstract}

\section{Introduction}

An important goal of causal inference is to understand from observational data the causal effect of performing an intervention---e.g., the effect of a behavioral choice on an individual's health \citep{pearl2009causality}. As an initial motivating example for this work, consider the problem of determining the effect of smoking on an individual's risk of heart disease.

This problem is complicated by the presence of 
confounders: 
e.g., it is possible that individuals who smoke have a higher likelihood to be sedentary, which is a lifestyle choice that also negatively impacts their heart disease risk. 
If the individual's lifestyle is available to us as a well-defined feature, we may adjust for this factor while computing treatment effects 
\citep{pearl2009causality}.
However, confounders are often not observed and not available in the form of features, making accurate causal inference challenging.

Oftentimes, datasets in domains such as medicine or genomics come with large amounts of {\em unstructured data}---e.g., medical images, clinical notes, wearable sensor measurements \citep{bycroft2018uk}. This data contains strong proxy signal about unobserved confounding factors---e.g., wearable sensor measurements can help reveal individuals who are sedentary. However, existing causal inference methods are often not able to leverage this ``dark data" for causal effect estimation \citep{kallus2020deepmatch}.

The goal of this paper is to develop methods in causal inference that improve the estimation of causal effects in the presence of unobserved confounders by leveraging additional sources of unstructured multi-modal data, such as images and text. 
For example, given time series from patients' wearables, our methods may disentangle the effects of being sedentary from the effects of smoking by
using clusters in the sensor measurements (which would correspond to groups of active and sedentary individuals) as a proxy for the patients' lifestyles and
without requiring explicit lifestyle features.



Concretely, our paper formalizes the task of estimating causal effects using rich, unstructured, multi-modal proxy variables (e.g. images, text, time series)
and introduces deep multi-modal structural equations, a generative model in which 
confounders are latent variables.
This model 
can perform causal effect estimation with missing data
by leveraging approximate variational inference and learning algorithms \citep{blei2017variational} developed for this task.
Previous methods relied on propensity scoring with neural approximators \citep{pryzant2017predicting,veitch2019using,veitch2020adapting}, which may output volatile probabilistic outputs that lead to unreliable effect estimates \citep{kallus2020deepmatch}.
Our methods are generative, and thus naturally sidestep some of the aforementioned instabilities, support high-dimensional treatment variables and incomplete data, and as a result are applicable to a broader class of causal inference problems. 

We evaluate our methods on an important real-world causal
inference task---estimating the effects of genetic mutations in genome-wide association studies (GWASs)---as well as on benchmarks derived from popular causal inference datasets.
The intervention variables in a GWAS are high-dimensional genomic sequences, hence existing methods based on propensity scoring are not easily applicable.
In contrast, we demonstrate that our algorithms naturally leverage high-dimensional genetic and environmental data (e.g., historical weather time series) and can discover causal genetic factors in plants and humans more accurately than existing GWAS analysis methods.
%



\paragraph{Contributions}
In summary, this paper makes three contributions: (1) we define the task of estimating causal effects using rich, unstructured multi-modal proxy variables; (2) we introduce deep multi-modal structural equations, a generative model tailored to this problem, and we describe associated variational learning and inference algorithms; (3) we demonstrate 
on an important real-world  problem (GWAS)
that unstructured data can improve causal effect estimation, enabling the use of large amounts of ``dark data" that were previously not used in causal inference.

\section{Background}




\paragraph{Notation}

Formally, we are given an observational dataset $\mathcal{D}=\{(x^{(i)},y^{(i)},t^{(i)})\}_{i=1}^n$ consisting of $n$ individuals, each characterized by features $x^{(i)} \in \mathcal{X} \subseteq \mathbb{R}^d$, a binary treatment $t^{(i)} \in \{0,1\}$, and a scalar outcome $y^{(i)} \in \mathbb{R}$. We initially assume binary treatments and scalar outcomes, and later discuss how our approach naturally extends beyond this setting.
We also use $z^{(i)} \in \mathbb{R}^p$ to model latent confounding factors that influence both the treatment and the outcome \citep{louizos2017causal}. 
We are interested in recovering the true effect of $T=t$ in terms of its conditional average treatment effect (CATE), also known as the individual treatment effect (ITE) and average treatment effect (ATE).
\begin{align}
Y[x,t] =  \mathbb{E}[Y | X =x, \text{do}(T=t)]
&& \text{ITE}(x) = Y[x,1] - Y[x,0]  
&& \text{ATE} = \mathbb{E}[\text{ITE}(X)],
\end{align}
where $ \text{do}(\cdot)$ denotes an intervention \citep{pearl2000models}.
Many methods for this task rely on propensity scoring \citep{pryzant2017predicting,veitch2019using,veitch2020adapting}, which uses a model $p(t|x)$ to assign weights to individual datapoints; however, when $x$ is high-dimensional and unstructured, a neural approximator for $p(t|x)$ may output volatile and miscalibrated probabilities close to $\{0,1\}$ that lead to unreliable effect estimates \citep{kallus2020deepmatch}.

\paragraph{Structural Equations}

An alternative approach 
are structural equation models of the form 
\begin{align}
x = f_1(z, \varepsilon_1) && t = f_2(z, \varepsilon_2) && y = f_3(z,t, \varepsilon_3),
\label{eqn:structural_eqn}
\end{align}
where $Z \sim p(Z)$ is drawn from a prior and the $\varepsilon_i$ are noise variables drawn independently from their distributions \citep{duncan2014introduction}.
Structural equations define a {\em generative model} $p(x,y,z,t)$ of the data.
When this model encodes the true dependency structure of the data distribution, we can estimate the true effect of an intervention by clamping $t$ to its desired value and drawing samples. 

\paragraph{Deep Structural Equations}

Equations \ref{eqn:structural_eqn} can be parameterized with deep neural networks, which yields deep structural equation models \citep{tran2017implicit,louizos2017causal}. Expressive neural networks may learn a more accurate model of the true data distribution on large datasets, which improves causal effect estimation.
Such models have been used for GWAS analysis \citep{tran2017implicit}
 and to correct for proxy variables \citep{louizos2017causal}.


\section{Causal Effect Estimation With Unstructured Proxy Variables}

Oftentimes, datasets in domains such as medicine or genomics come with large amounts of unstructured data (medical images, clinical notes), which contains strong proxy signal about unobserved confounding factors.
Our paper seeks to develop methods that leverage unstructured data within causal inference. We start by formalizing this task as causal effect estimation with unstructured proxy variables; these proxies may come from multiple diverse modalities (images, text). 

\subsection{Task Definition}
\label{sec:definition}

Formally, consider a causal inference dataset $\mathcal{D}=\{(x^{(i)},y^{(i)},t^{(i)})\}_{i=1}^n$ in which $x^{(i)} = (x_1^{(i)}, x_2^{(i)}, \ldots, x_m^{(i)})$ is a vector of $m$ distinct input modalities $x_j^{(i)} \in \mathcal{X}_j$ (e.g., images, text, time series, etc.). In other words, $\mathcal{X} = \mathcal{X}_1 \times \ldots \times \mathcal{X}_m$, where each $\mathcal{X}_j$ corresponds to a space of images, time series, or other unstructured modalities. Here, $t^{(i)} \in \mathcal{T}$ (binary or continuous) is the treatment and $y^{(i)} \in \mathcal{Y}$ is the output.
Some modalities may also be missing at training or inference time.

We are interested in recovering the true effect of $t$ in terms of the individual and average treatment effects. We are specifically interested in estimating the individual treatment effect (ITE) from arbitrary subsets of modalities $\mathcal{M} \subseteq \{1,2,...,m\}$, indicating that  certain inputs  may be missing at test time.
\begin{align}
Y[x,t,\mathcal{M}] =  \mathbb{E}[Y|\text{do}(T=t), X_j=x_j \text{ for $j$ in } \mathcal{M}] &&
\text{ITE}(x,\mathcal{M}) =  Y[x,t=1,\mathcal{M}] - Y[x,t=0,\mathcal{M}]
\label{eqn:ite_calc}
\end{align}
%
To help make this setup more concrete, we define two motivating applications.

\paragraph{Healthcare}
Consider the task of determining the effect of smoking on heart disease from an observational dataset of patients.
The observational study may contain additional unstructured data about individuals, e.g., clinician notes, medical images, wearable sensor data, etc. 
This data may hold information about hidden confounders: for example, raw wearable sensor data can be clustered to uncover sedentary and active indivudals, revealing a latent confounding factor, sedentary lifestyle. 

\paragraph{Genomics}
Consider the problem of estimating the effects of genetic variants via a genome-wide association study (GWAS). Modern GWAS datasets in plants or humans feature large amounts of unstructured inputs \citep{bycroft2018uk}: clinical notes, medical records, meteorological time series. 
For example, historical weather data (e.g., precipitation, wind strength, etc.) can reveal distinct climatic regions that affect plant phenotypes and whose confounding effects should be corrected for in a GWAS \citep{weigel20091001}. 

\section{Deep Structural Equations for Causal Effect Estimation}


Next, we derive models and inference algorithms for the task of causal effect estimation with unstructured proxy variables.
Our approach uses deep structural equations to extract confounding signal from the multi-modal proxies $x_j^{(i)}$. 
We use neural networks because they naturally handle unstructured modalities via specialized architectures (e.g., convolutions for images) that can learn high-level representations over raw unstructured inputs (e.g., pixels).







Parameterizing structural equations with neural networks also presents challenges: they induce complex latent variable models that require the development of efficient approximate inference algorithms \citep{duncan2014introduction,blei2017variational}.
We present an instantiation of Equations \ref{eqn:structural_eqn} that admits such efficient algorithms.

\subsection{Deep Multi-Modal Structural Equations}
We start by introducing deep
multi-modal structural equations (DMSEs), a generative model 
for estimating causal effects 
in which confounders are
latent variables and unstructured data are proxy variables. We define a DMSE model as follows:
%
\begin{align}
\label{eqn:dmse}
z \sim \mathcal{N}(0_p,I_p) &&
x_j \sim p_{x_j}(\,\cdot\, ; \theta_{x_j}(z))  \;\, \forall j &&
t \sim \mathrm{Ber}(\pi_t(z)) && 
y \sim p_y(\, \cdot \, ; \theta_y(z,t)),
\end{align}
where $p_{x_j}, p_y$ are probability distributions with a tractable density over $x_j$ and $y$, respectively, and the $\theta_{x_j}, \theta_{y}$ are the parameters of $p_{x_j}, p_y$. The $\theta_{x_j}, \theta_{y}$ are themselves functions of $z, t$ parameterized by neural networks---e.g.,
when $x_j$ is Gaussian, the $\theta_{x_j}(z)$ are a mean and a covariance matrix $\mu_{x_j}(z), \Sigma_{x_j}(z)$ that are parameterized by a neural network as a function of $z$ (see \citep{kingma2013autoencoding,louizos2017causal}). 
Note that other modeling choices (e.g., Bernoulli distributions for discrete variables) are also possible.

Note that the models for $\theta_{x_j}(z)$ can benefit from domain-specific neural architectures---e.g., a convolutional parameterization for $\mu_{x_j}(z), \sigma_{x_j}(z)$ as a function of $z$ is more appropriate when the $x_j$ are images. See Appendix \ref{apdx-architecture} for details on recommended architectures. 

While Section \ref{sec:definition} defines $y, t$ as scalars following existing literature \citep{louizos2017causal,wang2019blessings,veitch2019using}, DMSEs can also define a model with high-dimensional $y,t$---we simply choose the distributions over $y,t$ to be multi-variate.
Our inference and learning algorithms will remain unchanged, except for the parameterization of specific approximate posteriors (e.g., $q(z|y,t)$). In fact, we apply DMSEs to high-dimensional $t$ in our GWAS experiments in Section \ref{sec:experiments}.
Note that this is a setting where existing propensity scoring methods (which learn a model of $p(t|x)$) are not directly applicable \citep{pryzant2017predicting,veitch2019using}---our approach, on the other hand, can easily be used with high-dimensional $t$ on tasks like GWAS analysis.

\paragraph{Dependency Structure}
Equations (\ref{eqn:dmse}) define a density 
$p(z) p(t|z)p(y|z,t) \prod_{j=1}^m p(x_j|z)$. 
Note that 
each proxy $x_j$ is independent of the others conditioned on $z$. 
Figure~\ref{fig:multimodal_structure} shows these dependencies as solid lines. 
In our setting, the $x_j$ represent image pixels, waveform measurements, etc; thus, they should not directly influence each other or $y, t$.
For example, it would not make sense for the pixels of an image to causally influence the samples of a waveform measurement---they influence each other only through a latent confounder $z$ (e.g., patient health status), and the $x_j$ are therefore assumed to be conditionally independent given $z$.
These independence assumptions will also enable us to derive efficient stochastic variational inference algorithms, as we show below.

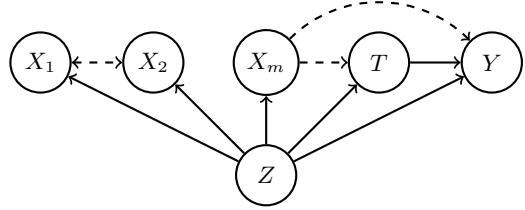
\begin{wrapfigure}{r}{7cm}
    \vspace{-8mm}
    \centering
    \begin{small}
    \begin{tikzpicture}[thick, node distance={15mm}, minimum size=0.8cm, main/.style = {draw, circle}] 
    \node[main] (1) {$X_1$};
    \node[main] (2) [right of=1] {$X_2$};
    \node[main] (3) [right of=2] {$X_m$};
    \node[main] (4) [right of=3] {$T$};
    \node[main] (6) [right of=4] {$Y$};
    \node[main] (5) [below of=3]{$Z$}; 
    \draw [->] (5) -- (1);
    \draw [->] (5) -- (2);
    \draw [->] (5) -- (3);
    \draw [->] (5) -- (4);
    \draw [->] (4) -- (6);
    \draw [->] (5) -- (6);
    \draw [dashed, <->] (1) -- (2);
    \draw [dashed, ->] (3) to [out=45,in=135] (6);
    \draw [dashed, ->] (3) -- (4);
    \end{tikzpicture} 
    \end{small}
     \caption{Causal graph for the DMSE model. Solid lines depict dependencies between variables. Appendix \ref{app:alt_graphs} contains simple extensions of DMSEs that support extra dependencies (dashed lines).}
    \label{fig:multimodal_structure}
    \vspace{-5mm}
\end{wrapfigure}

In certain settings, we may also want to model observed confounders as well as additional {\em structured} proxies $x_j$ that have direct causal effects on each other and on $y, t$. Figure \ref{fig:multimodal_structure} shows these additional dependencies as dashed lines.
Our model admits simple generalizations that support these modeling assumptions.
In brief, our variational inference algorithms can be conditioned on the observed variables, and mutually dependent sets of proxies can be treated as a group represented by one high-dimensional variable. 
See Appendix \ref{app:alt_graphs} for details.

\subsection{Approximate Inference and Learning Algorithms}
\label{subsection:approx_inference}


In the full multi-modal setting, the $x_j$ are conditionally independent given $z$ (Figure~\ref{fig:multimodal_structure}), which enables us to apply efficient algorithms inspired by \citet{wu2018multimodal}. These algorithms offer the following improvements: (1) we may perform learning, inference, and causal estimation with missing modalities $x_j$; (2) the process for performing causal inference  does not require training auxiliary inference networks as in previous work \citep{louizos2017causal}.


The DMSE model induces a tractable joint density $p(x,y,t,z)$, which allows us to fit its parameters
using stochastic variational inference by optimizing the evidence lower bound (ELBO):
\begin{align} 
\text{ELBO}_X & = \sum_{i=1}^n \mathbb{E}_{q} \bigl[\sum_{j=1}^m\log p(x_j^{(i)}|z) + \log p(y^{(i)},t^{(i)},z) - \log q(z|x^{(i)},y^{(i)},t^{(i)}) \bigr],
\end{align}
where $p(y^{(i)},t^{(i)},z) = p(y^{(i)} | t^{(i)},z) p(t^{(i)}|z) p(z)$
and $q(z|x^{(i)}, y^{(i)}, t^{(i)})$ is the approximate variational posterior. We assume a total of $m$ modalities.

\paragraph{Structured Multi-Modal Variational Inference}

We may use the independence structure of $p$ (Figure~\ref{fig:multimodal_structure}) to derive an efficient structured form for $q$. 
First, observe that because the true posterior factorizes as $p(z|x, t, y) \propto (p(z| t, y)\prod_{j=1}^{m} p(z|x_j))/\prod_{j=1}^{m-1} p(z)$,
the optimal approximate posterior $q$ must also factorize as
$q(z|x, t, y) \propto (q(z| t, y)\prod_{j=1}^{m} q(z|x_j))/\prod_{j=1}^{m-1} p(z)$. 
This decomposition implies that we can maintain the optimal structure of $q$ by training modality-specific inference networks $\tilde q(z|t, y)$ and $\tilde q(z|x_j)$ such that $q(z|x_j) = \tilde q(z|x_j) p(z)$ and $q(z|t, y) = \tilde q(z|t, y) p(z)$ and by defining a joint posterior as
\begin{align}
    q(z|x, y, t) \propto p(z) \tilde q(z| y, t)\prod_{j=1}^{m} \tilde q(z|x_j).
    \label{eqn:product_of_experts}
\end{align}
This network can be seen as a product of experts (PoE)~\citep{wu2018multimodal}.

Computing the density $q$ is in general not possible. However, because $p(z)$, $q(z|t, y)$ and $q(z|x_j)$ are Gaussians, we may use the fact that a product of Gaussians with means $\mu_i$ and covariances $V_i$ is $\mu = (\sum \mu_i T_i ) / (\sum T_i)$ and  $V = (\sum_i T_i)^{-1}$, where $T_i = 1/V_i$. Thus, computing $q(z|x, t, y)$ for any subset of modalities is  possible without having to train an inference network for each subset of modalities separately. 

\paragraph{A Multi-Modal ELBO}
Training this architecture with $\text{ELBO}_X$ will not necessarily yield good single-modality inference networks $\tilde q(z|x_j)$, while training with each modality separately will prevent the network from learning how the modalities are related to each other. Hence, we train our model with a sub-sampled ELBO objective \citep{wu2018multimodal} that is computed on the full set of modalities, each individual modality, and a few subsets of modalities together. For this, we randomly pick at each gradient step $s$ non-empty subsets $\{\mathcal{M}_k\}_{k=1}^{s}$ of the set of modalities $\mathcal{M}_k\subseteq \{1,2,...,m\}$. The final objective is
$
    \text{ELBO}_X + \sum_{j=1}^m \text{ELBO}_{\{j\}} + \sum_{k=1}^{s} \text{ELBO}_{\mathcal{M}_k},
$
where
\begin{align*}
\text{ELBO}_{\mathcal{M}}  =\sum_{i=1}^n \mathbb{E}_{q} & \left[ \sum_{j \in \mathcal{M}}\log p(x_j^{(i)},z) \right. + \log p(y^{(i)},t^{(i)},z) \left. - \log ( p(z) \tilde q(z| y^{(i)}, t^{(i)})\prod_{j\in \mathcal{M}} \tilde q(z|x_j^{(i)}) ) \right]
\end{align*}
Our model doesn't train extra auxiliary inference networks, unlike that of \citet{louizos2017causal}.

\subsection{Deep Gaussian Structural Equations}

The DMSE model can be simplified in settings 
in which there is only one type of proxy $x$ (i.e., $m=1$). 
This simplified model, which we call deep Gaussian structural equations (DGSEs), has a tractable joint density
$p(x,y,t,z) = p(z)p(x|z)p(t|z)p(y|z,t),$ 
where the latent $z$ is Gaussian.

The DGSE model can also be fit using stochastic variational inference by optimizing the ELBO objective
$
\sum_{i=1}^n \mathbb{E}_{q} \left[ \log p(x^{(i)},y^{(i)},t^{(i)},z) - \log q(z|x^{(i)},y^{(i)},t^{(i)}) \right],
$
where $q(z|x,y,t)$ is an approximate variational posterior. 
We optimize the above objective using gradient descent, applying the reparameterization trick to estimate the gradient.
We compute the counterfactual $Y[x,t]$ using auxiliary inference networks as in earlier work \citep{louizos2017causal}. See Appendix \ref{apdx-architecture} for the full derivation. 


\subsection{Properties of Deep Multi-Modal Structural Equations}

\paragraph{Recovering Causal Effects}
The DMSE and DGSE models determine the true causal effect when their causal graph is correct and they recover the true data distribution. The following argument is analogous to that made in most previous works on causal deep generative models \cite{louizos2017causal, pengzhou2021principled, pawlowski2020deep}.

\begin{theorem}
The DMSE and DGSE models recover the true $\text{ITE}(x, \mathcal{M})$ for any subset $\mathcal{M} \subseteq\{1,2,...,m\}$ of observed modalities whenever they represent the true data distribution $p(x,y,t,z)$.
\end{theorem}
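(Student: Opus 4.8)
The plan is to show that the interventional distribution $p(Y \mid \text{do}(T=t), X_j = x_j \text{ for } j \in \mathcal{M})$ is identified by the observational distribution represented by the model, and that it coincides with the model's own predicted counterfactual. First I would invoke the standard do-calculus / truncated-factorization argument: since the model's density factors as $p(z)\,p(t \mid z)\,p(y \mid z,t)\prod_{j}p(x_j \mid z)$ according to the causal graph of Figure~\ref{fig:multimodal_structure}, intervening on $T$ amounts to replacing $p(t \mid z)$ by a point mass at the chosen value, yielding the mutilated density $p(z)\,p(y \mid z,t)\prod_j p(x_j \mid z)$. From here I would marginalize out $z$ and the unobserved modalities $x_j$ with $j \notin \mathcal{M}$, conditioning on the observed ones, to obtain a closed-form expression for $\mathbb{E}[Y \mid \text{do}(T=t), X_j=x_j, j\in\mathcal{M}]$ purely in terms of the model's factors. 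The key identity to establish is
\begin{align*}
Y[x,t,\mathcal{M}] = \int \mathbb{E}[Y \mid z, t]\; p(z \mid X_j = x_j, j \in \mathcal{M})\; dz,
\end{align*}
where the posterior over the confounder depends only on the observed proxies, not on $t$ (this is exactly the backdoor adjustment through $Z$, with $Z$ the sole confounder since all paths from $T$ to $Y$ other than the direct edge pass through $Z$).

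Next I would argue identifiability: the right-hand side above is a functional of the joint $p(x,y,t,z)$, so two models agreeing on this joint must agree on $Y[x,t,\mathcal{M}]$, and hence on $\text{ITE}(x,\mathcal{M}) = Y[x,1,\mathcal{M}] - Y[x,0,\mathcal{M}]$. Since the hypothesis of the theorem is precisely that the fitted model \emph{represents the true data distribution} $p(x,y,t,z)$, its posterior $p(z \mid X_j=x_j, j\in\mathcal{M})$ and its outcome mechanism $p(y \mid z,t)$ equal the true ones, so the adjustment formula evaluated under the model returns the true interventional quantity. The conditional independence of the $x_j$ given $z$ (established in the Dependency Structure discussion) is what lets the observed-modality posterior be computed correctly from any subset $\mathcal{M}$, which is the feature that makes the statement hold \emph{for every} $\mathcal{M}$ rather than just the full set.

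The main obstacle I anticipate is being precise about what ``recovers the true data distribution'' buys us at the level of the \emph{latent} variable: the theorem conditions on matching the full joint including $z$, so I should be careful to state that the claim is conditional on the causal graph being correct (i.e. $Z$ is a sufficient adjustment set and there are no unmodeled confounders or extra edges among the $x_j, t, y$ beyond the solid arrows). If instead one only assumes the observed marginal $p(x,y,t)$ is matched, identifiability of the latent-variable adjustment is a genuinely more delicate question; I would flag that the present statement sidesteps this by assuming agreement on the full joint, consistent with the analogous arguments cited from \citet{louizos2017causal} and \citet{pawlowski2020deep}. A secondary, more routine step is verifying that the subset-marginalization is valid for arbitrary $\mathcal{M}$ — this follows directly from the product structure, since integrating out any $x_j$ ($j \notin \mathcal{M}$) simply removes the factor $p(x_j \mid z)$ from the posterior normalization without coupling to $t$ or $y$.
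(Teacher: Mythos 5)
Your proposal is correct and follows essentially the same route as the paper's proof: backdoor adjustment through $Z$, writing $p(y \mid x_\mathcal{M}, \text{do}(t=t')) = \int p(y \mid z, x_\mathcal{M}, t')\, p(z \mid x_\mathcal{M})\, dz$ with the posterior over $z$ depending only on the observed proxies, and concluding identifiability because every factor on the right-hand side is determined by the full joint $p(x,y,t,z)$ that the model is assumed to match. Your additional simplification $p(y \mid z, x_\mathcal{M}, t') = \mathbb{E}[Y \mid z,t']$ via the conditional independence $Y \perp X_\mathcal{M} \mid Z, T$, and your caveat distinguishing full-joint matching from observed-marginal matching, are refinements consistent with (not departures from) the paper's argument.
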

\begin{proof}
We establish the theorem for DMSEs; the proof for DGSEs is analogous with $m=1$. 
Let $x_\mathcal{M} = \{x_j \mid j \in \mathcal{M}\}$ be the data from the observed subset of modalities.
We need to show that $p(y | x_\mathcal{M}, \text{do}(t=t'))$ is identifiable for any $t'$. Observe that
\begin{align*}
   p(y | x_\mathcal{M}, \text{do}(t=t')) 
   = \int_{z} p (y | z, x_\mathcal{M}, do(t=t')) p (z | x_\mathcal{M}, do(t=t')) dz 
   = \int_{z} p (y | z, x_\mathcal{M}, t') p (z | x_\mathcal{M}) dz, 
\end{align*}
where the second equality follows from the rule of do-calculus (applying backdoor adjustment).
Since our proof holds for any $t'$ and all elements on the right-hand side are identifiable, the claim follows.
\end{proof}

Note that in practice our assumption may not hold (e.g., neural network optimization is non-convex and may fail), 
but there is evidence of both failure modes \cite{rissanen2021} as well as successful settings in which deep latent variable models provide useful causal estimates \cite{pengzhou2021principled, pawlowski2020deep, mayer2020missdeepcausal, shi2019adapting}. 
See our Discussion section for additional details.

\paragraph{Identifiability in Linear Models}

Structural equations parameterized by non-convex neural networks are less amenable to analysis than simpler model classes.
However, we may
provide theoretical guarantees in the special case where a DMSE model (Equations \ref{eqn:dmse}) is linear, i.e., each equation with input variables $u \in \mathbb{R}^{d_1}$ has the form $A \cdot u + b$ for some $A \in \mathbb{R}^{d_2 \times d_1}, b \in \mathbb{R}^d_1$. Specifically, we establish in Appendix \ref{app:theory} the following result.
\newpage
\begin{theorem}
  Given a binary treatment $t$, a univariate outcome $y$, confounder $z$ and proxy variables $u, v, w$, the causal effect $P(y|\textrm{do}(t))$ is identifiable if
  \begin{enumerate}
      \item The structural equations follow a DMSE model (Figure \ref{fig:multimodal_structure}, solid edges) and are linear.
      \item Three independent views of $z$ are available in the form of proxies $u, v, w$ such that $u \perp v \perp w | z$ and the equations between $z$ and $u, v, w$ are parameterized by matrices of rank $\text{dim}(z)$
  \end{enumerate}
\end{theorem}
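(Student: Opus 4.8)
The plan is to reduce the identification of $P(y\mid\mathrm{do}(t))$ to the recovery of a handful of structural coefficients, and then to show that these coefficients are pinned down by the observable cross-moments of the three proxies together with $t$ and $y$. Since the model is linear and $z\sim\mathcal{N}(0_p,I_p)$, I would first write the equations explicitly as $u=M_u z+\eta_u$, $v=M_v z+\eta_v$, $w=M_w z+\eta_w$, and $y=\alpha^\top z+\beta t+\eta_y$, where $M_u,M_v,M_w$ have rank $\dim(z)=p$ and the $\eta$'s are independent zero-mean noise. Because $t$ is binary, any dependence of $y$ on $t$ is linear without loss of generality, so the target collapses to the single coefficient $\beta$: by the same backdoor adjustment used in the previous theorem, $z$ is a valid adjustment set, whence $\mathbb{E}[y\mid\mathrm{do}(t=t')]=\beta t'$ and the residual law of $y$ is Gaussian with variance $\alpha^\top\alpha+\mathrm{Var}(\eta_y)$. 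Identifying $\beta$ together with this residual variance therefore identifies the whole interventional distribution.

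The core step is to exploit the conditional-independence structure $u\perp v\perp w\mid z$. Taking expectations through the latent gives the population cross-covariances $\Sigma_{uv}=M_uM_v^\top$, $\Sigma_{uw}=M_uM_w^\top$, and $\Sigma_{vw}=M_vM_w^\top$, each of rank $p$ under the rank assumption. I would then invoke the classical three-view identifiability result for such factor structures: from these three pairwise products the mixing matrices $M_u,M_v,M_w$ are recoverable up to a single common orthogonal transformation $Q$ of the latent coordinates, which is the only ambiguity that survives once $\mathrm{Cov}(z)=I_p$ is fixed. Using the same conditional independence, $\mathbb{E}[ut]=M_u\,c$ and $\mathbb{E}[uy]=M_u(\alpha+\beta c)$ with $c:=\mathbb{E}[zt]$; since $M_u$ has full column rank it is left-invertible, so $c$ and $s:=\alpha+\beta c$ are each recovered up to that same $Q$.

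Finally I would solve for $\beta$ using one additional observable moment, $\mathbb{E}[ty]=\alpha^\top c+\beta\,\mathbb{E}[t]$ (here $\mathbb{E}[t^2]=\mathbb{E}[t]$ since $t$ is binary). Substituting $\alpha=s-\beta c$ gives $\beta=\bigl(\mathbb{E}[ty]-s^\top c\bigr)\big/\bigl(\mathbb{E}[t]-\lVert c\rVert^2\bigr)$. The crucial observation is that although $s$ and $c$ are known only up to the rotation $Q$, the quantities entering this expression, namely the inner product $s^\top c$ and the norm $\lVert c\rVert^2$, are invariant under $c\mapsto Qc$ and $s\mapsto Qs$; hence $\beta$, and by the same token $\alpha^\top\alpha$, are well-defined functionals of the observable distribution. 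This establishes identifiability of $P(y\mid\mathrm{do}(t))$ provided the denominator $\mathbb{E}[t]-\lVert c\rVert^2$ is nonzero, i.e.\ $t$ is not a deterministic linear function of $z$.

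The step I expect to be the main obstacle is the three-view recovery of the mixing matrices up to rotation: it is where the rank-$p$ assumption is essential, and it requires care in arguing both that no ambiguity beyond the orthogonal $Q$ survives and that the pairwise products genuinely have rank $p$, so that the left inverses used to extract $c$ and $s$ are the correct ones. Everything downstream is linear algebra, and the binary nature of $t$ never obstructs the argument because only the moments $\mathbb{E}[t]$, $\mathbb{E}[zt]$, and $\mathbb{E}[ty]$ enter.
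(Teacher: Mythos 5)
Your proposal is correct and rests on the same underlying fact as the paper's proof, but it is organized along a genuinely different route. The paper's argument (Appendix~\ref{app:theory}, Theorem~\ref{thm:linear_sem}) extends Kuroki--Pearl: it first writes the backdoor-adjusted effect as the ratio $\tau_{YX} = \bigl(\sigma_{XY} - \Sigma_{X\bar{U}}\Sigma_{\bar{U}\bar{U}}^{-1}\Sigma_{Y\bar{U}}^{\top}\bigr)\big/\bigl(\sigma_{XX} - \Sigma_{X\bar{U}}\Sigma_{\bar{U}\bar{U}}^{-1}\Sigma_{X\bar{U}}^{\top}\bigr)$ and then eliminates every unobservable from this formula via the identity $\Sigma_{\bar{W}\bar{Z}}\Sigma_{\bar{V}\bar{Z}}^{-1}\Sigma_{\bar{V}\bar{W}} = \beta_{\bar{W}\bar{U}}\Sigma_{\bar{U}\bar{U}}\beta_{\bar{W}\bar{U}}^{\top}$, which hands over exactly the middle matrix the ratio needs; the loading matrices themselves are never recovered, so no gauge ambiguity ever appears. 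You instead recover the loadings $M_u, M_v, M_w$ up to a common orthogonal $Q$ (invoking the classical three-view result), left-invert $M_u$ to pull back $c=\mathbb{E}[zt]$ and $s=\alpha+\beta c$ up to $Q$, and close with one scalar moment equation, $\beta = \bigl(\mathbb{E}[ty]-s^{\top}c\bigr)\big/\bigl(\mathbb{E}[t]-\|c\|^{2}\bigr)$, whose ingredients $s^{\top}c$ and $\|c\|^{2}$ are rotation-invariant; I checked the algebra and it is sound, and it matches the paper's formula after converting raw moments to covariances. What each buys: the paper's route is fully self-contained and directly constructive (an explicit estimator in observable covariances), while yours makes the rotation gauge and the reason it is harmless explicit, and goes slightly further by also identifying the interventional variance $\alpha^{\top}\alpha + \mathrm{Var}(\eta_y)$ (under Gaussian noise), whereas the paper only identifies the mean effect $\tau_{YX}$. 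The one step you leave as a black box---three-view recovery up to a common orthogonal transformation, which you rightly flag as the crux---is not a real gap, but note that its proof \emph{is} the paper's product identity (in the square case $M_uM_u^{\top} = \Sigma_{uv}\Sigma_{wv}^{-1}\Sigma_{wu}$, with pseudo-inverses and the full-column-rank condition handling $\dim(u) > \dim(z)$), so discharging it would bring your proof back onto the paper's central computation. Finally, your nondegeneracy caveat $\mathbb{E}[t]-\|c\|^{2}\neq 0$ (i.e., $t$ is not a deterministic linear function of $z$) is implicitly required by the paper as well, since its denominator is $\mathrm{Var}(t)-\|c\|^{2}$.
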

Our proof extends techniques developed by Kuroki and Pearl \citep{kuroki2014measurement} to high-dimensional proxy variables. Interestingly, our result crucially relies on an independence structure specified by Figure \ref{fig:multimodal_structure} (specifically, the existence of three independent proxy variables), which lends additional support for this modeling assumption and for the development of variational techniques specialized to this model family.

\paragraph{Computing Causal Effects}
Given a subset of modalities $\mathcal{M}$, we can compute the ATE \& ITE as $\E (\text{ITE}(x, \mathcal{M}))$, where $\text{ITE}(x,\mathcal{M}) =  Y[x,t=1,\mathcal{M}] -  Y[x,t=0,\mathcal{M}]$ and
\begin{align}
\label{eqn:ite_comp}
   p(y | x, \text{do}(t=t')) &= \int_{z} p (y| t=t', z) p (z | x)dz 
                \approx \int_{z} \bigr( p (y | t=t', z) p(z)\prod_{j=1}^{m} \tilde q(z|x_j)\bigl) dz,
\end{align}
where we use our variational posterior  formulation from Equation~\ref{eqn:product_of_experts} to approximate the true posterior $p(z|x)$.

\section{Experimental Results}
\label{sec:experiments}

\subsection{Synthetic Demonstration Dataset}
We start with a demonstration that provides intuition for why proxy variables are important, and how unstructured proxies can serve in place of featurized (structured) ones. 
The following small-scale synthetic setup (\citet{louizos2017causal}) involves a data distribution $\mathbb{P}$ over binary variables $y, t, z, x$:
\begin{align*}
    \mathbb{P}(z =1) = \mathbb{P}(z=0) = 0.5
    &&
    \mathbb{P}(x=1|z=1) = \rho_{x1} = 0.3
    &&
    \mathbb{P}(x=1|z=0) = \rho_{x0} = 0.1 \\
    y = t \oplus z
    &&
    \mathbb{P}(t=1|z=1) = \rho_{t1} = 0.4
    &&
    \mathbb{P}(t=1|z=0) = \rho_{t0} = 0.2
\end{align*}
where $0 < \rho_{x1}, \rho_{x0}, \rho_{t1}, \rho_{t0} < 1$ are parameters. We also introduce an unstructured proxy variable $\mathbf{X}$ that represents an ``image version" of $x$. The variable $\mathbf{X}$ will be a random MNIST image of a zero or one, depending on whether $x=0$ or $x=1$.
Formally, $\mathbf{X}$ is distributed as follows:
\begin{align*}
    \mathbb{P}(\mathbf{X}|x=1)\text{ is unif.~over MNIST images of `1' } &&
    \mathbb{P}(\mathbf{X}|x=0)\text{ is unif.~over MNIST images of `0'}
\end{align*}
\begin{wraptable}{l}{8 cm}
\vspace{-0.3cm}
\caption{Treatment effect estimation on the synthetic demonstration dataset.}
\label{table:toyexperiment}
\vskip 0.15in
\centering
\begin{small}

\begin{tabular}{lcccr}
\toprule
& Setting & $\varepsilon_{ATE}$ (Train) & $\varepsilon_{ATE}$ (Test) \\
\midrule
     Deep Str Eqns &
     Binary & 0.062
(0.012)
 & 0.069 (0.015) \\
      &
     Image & 0.068
(0.018)
 & 0.096
(0.018)
 \\
     \midrule

     IPTW ~\citep{Lunceford2004iq}  &
     Binary & 0.090
(0.005)
 & 0.127
(0.016)
 \\
      &
     Image & 5.050
(0.607)
 & 4.067
( 0.533)
 \\
     \midrule

     Augmented  &
     Binary & 0.442 (0.040) & 0.487
 (0.037)
 \\
     IPTW~\citep{robins2000onprofile} &
     Image & 4.717
(0.670)
 & 6.426
 (1.603)
 \\
     \midrule
    
  Non-Causal  &
  Binary & 0.197 (0.003)
 & 0.206 (0.004)\\
  Baseline &
  Image & 0.214 ( 0.026)
 & 0.228 ( 0.025) \\

\bottomrule
\end{tabular}

\end{small}
\vspace{-0.2cm}
\end{wraptable}
First, this is a setup that requires us to model proxies : treating $\mathbf{X}$ as a confounder as using a model of $\mathbb{P}(y \mid \mathbf{X}, t)$ 
recovers the true ATE only when $\rho_{t1} =  1 - \rho_{t0}$ and $\rho_{x1} =  1 - \rho_{x0}$ (i.e., when $\mathbf{X}$ is perfectly informative of $z$), otherwise it fails (see also Appendix \ref{apdx-toy}).

We also show that structural equations solve this task. We sample 3000 data points from $\mathbb{P}$ and fit DGSE models to 80\% of the data points $\{x, y, t\}$ (the {\sc Binary} setting) as well as on $\{\mathbf{X}, y, t\}$ (the {\sc Image} setting). 
We note the Average Treatment Effect (ATE) on the training and test sets, and we report results in Table~\ref{table:toyexperiment}. We compare DGSE with the Inverse Probability of Treatment Weighted estimator (IPTW)~\citep{Lunceford2004iq} and the doubly robust Augmented-IPTW~\citep{robins2000onprofile}---in each case the propensity score model is an MLP trained to predict $t$ from either $x$ or $\mathbf{X}$. We found that replacing $x$ with an image $\mathbf{X}$ causes the model to output highly miscalibrated probabilities close to $0,1$ (while maintaining good accuracy), which results in large and volatile inverse propensity weights and in poor ATE estimates.



\subsection{Benchmark Datasets for Causal Effect Estimation}
\paragraph{IHDP}
The Infant Health and Development Project (IHDP) is a popular benchmark for causal inference algorithms ~\citep{hill2011bayesian} that contains the outcomes of comprehensive early interventions for premature, low birth weight infants. We create a benchmark for multi-modal causal inference based on IHDP in which we can replace certain features with their "unstructured version".
%
We choose 9 of the 25 features available in IHDP in order to magnify their relative importance and accurately measure the effects of their removal. 
Please refer to Appendix~\ref{apdx:ihdp-star-setups} and~\ref{apdx:evaluating-embeddings} for detailed setup. 

\begin{wraptable}{r}{6.5cm}
\vspace{-0.5cm}
\caption{Multimodal Experiments on IHDP Dataset:  With deep structural equations, replacing baby's gender with corresponding image embedding (8 attrs + image) shows some increase in ATE error as compared to IHDP-Mini setting (9 attrs) but is better than dropping this modality altogether (8 attrs).}
\label{table:IHDP-multimodal1}
\vskip 0.15in
\centering
\tabcolsep=1pt
\begin{small}
\begin{tabular}{lccccr}
\toprule
Model  & $\varepsilon_{ATE}$ (Train+Val) & $\varepsilon_{ATE}$  error(Test) \\
\midrule
        Deep Str Eqns \\
        \midrule
        9 attrs  & 0.259 (0.037) & 0.487 (0.078) \\
        8 attrs  & 0.392 (0.141) & 0.620 (0.158)\\
        8 attrs + image  & 0.372 (0.107) & 0.575 (0.130)\\
        \midrule
        CFRNet \\
        \midrule
        9 attrs  & 0.433 (0.063) & 0.549 (0.090) \\
        8 attrs  & 0.412 (0.062) & 0.608 (0.107)\\
        8 attrs + image  & 0.501 (0.076) & 0.617 (0.114)\\
        \midrule
        OLS \\
        \midrule
        9 attrs  & 0.424 (0.061) & 0.584 (0.100) \\
        8 attrs  & 0.429 (0.066) & 0.593 (0.103)\\
        8 attrs + image  & 0.428 (0.064) & 0.590 (0.101)\\
        
 \bottomrule
\end{tabular}
\end{small}
\vspace{-0.8cm}
\end{wraptable}

\paragraph{STAR}
The Student-Teacher Achievement Ratio (STAR) experiment~\citep{DVN/SIWH9F_2008} studied the effect of class size on the performance of students.
We consider small class size as treatment; the outcome is the sum of the reading and math scores of a student. We `derandomize' this dataset by removing 80\% of the data corresponding to white students in the treated population. 
Similarly to IHDP, we select 8 attributes for the multi-modal experiment. Further details can be found in the Appendix~\ref{apdx:ihdp-star-setups} and~\ref{apdx:evaluating-embeddings}. 

\paragraph{Adding Unstructured Modalities}
We create a benchmark for multi-modal causal inference derived from IHDP and STAR in which we replace features with unstructured inputs that contain the same information as their featurized versions.
On IHDP, we replace the attribute `baby's gender' with the CLIP embedding \citep{li2022clip} of an image of a child between ages 3 to 8 years, drawn from the UTK dataset~\citep{zhang2017age}. 
On STAR, we replace the attributes corresponding to the student's ethnicity and gender by selecting an image of a child with the same ethnicity and gender from the UTK dataset. 

We train and evaluate models on datasets where the image `replaces' the attribute. (e.g., {\sc 8 attrs + image}).  We also consider two other settings for comparison: a) the original attribute is included (e.g., {\sc 9 attrs}) and b) the attribute is dropped from the reduced set of input features ({\sc 8 attrs}). 

\paragraph{Results}

As seen in Table~\ref{table:IHDP-multimodal1}, the degradation in ATE error from replacing the baby's gender by a photograph is lower as compared to removing the attribute entirely. 
This shows our models leverage signal found in the unstructured image modality with the help of deep neural networks. 

We compare these results with a simple Ordinary Least Squares model (OLS) baseline as described by~\citet{shalit2017estimating} to predict treatment effect. 
OLS shows a similar behavior when replacing baby's gender with corresponding image, however ATE errors are generally worse as compared to ATE error produced by DGSE. We also compare this with Counterfactual Regression Network (CFRNet) \citep{johansson2018learningrep} baseline. However CFRNet did not show benefits of using image modality unlike our approach.

In Table~\ref{table:STAR-multimodal1}, replacing gender and ethnicity attributes on STAR with the corresponding image improves ATE errors as compared to dropping these two attributes entirely. This shows that we can use an image to extract multiple attributes while doing causal inference. 
The CFRNet baseline shows a similar behavior, but the difference between average ATE errors across different setups is small.

\subsection{Genome-Wide Association Studies}

We evaluate our methods on an important real-world causal
inference problem---genome-wide association study analysis (GWASs).
A GWAS is a large observational study that seeks to determine the causal effects of genetic markers (or genotypes) on specific traits (known as phenotypes).
In this setting, treatment variables are high-dimensional genomic sequences, and existing propensity scoring methods (which learn a model of $p(t|x)$ for a binary $t$) are not easily applicable \citep{pryzant2017predicting,veitch2019using}---they may require training an impractical number of models. Our approach, on the other hand, can easily be used with high-dimensional $t$.

\paragraph{Background and Notation}

As motivation, consider the problem of linking a plant's genetic variants $t \in \{0,1\}^{d}$ with {\em nutritional yield}, which we model via a variable $y \in \mathbb{R}$. 
Our goal is to determine if each variant $t_j$ is {\em causal} for yield, meaning that it influences biological mechanisms which affect this phenotype \cite{crouch2020polygenic}.
We also want to leverage large amounts of unstructured data $x$ (e.g., health records, physiological data) that are often available in modern datasets \cite{bycroft2018uk,li2020electronic}.
\begin{wraptable}{r}{9 cm}
\vspace{-0.3cm}
\caption{Comparison of standard DSE methods with linear baselines. The $\ell_1$ column refers to $\|\bm{\wh\gamma} - \bm{\gamma^\star}\|_1$ where $\bm{\wh\gamma}$ is the vector of estimated causal effects, and $\bm{\gamma^\star}$ is the vector of ground truth causal effects. Precision and recall are defined in \cref{sec:gwas-simulation-appendix}. Standard error of the Mean (sem) is computed over $10$ seeds.}
    \centering
    \begin{small}
    
\begin{tabular}{lrrr}
\toprule
           Model & $\ell_1 (\downarrow)$ & Precision $(\uparrow)$ & Recall $(\uparrow)$ \\
                &     Mean (sem) &      Mean (sem) &   Mean (sem) \\
\midrule
Optimal &     0.22 (0.04) &      0.97 (0.03) &    1.0 (0.00) \\
 DSE ($2$ modalities) &     0.30 (0.06) &      0.93 (0.04) &    1.0 (0.00) \\
            LMM &     0.44 (0.06) &      0.85 (0.08) &    1.0 (0.00) \\
   DSE ($1$ modality) &     0.60 (0.09) &      0.78 (0.08) &    1.0 (0.00) \\
          PCA ($1$ component) &     0.93 (0.17) &      0.58 (0.09) &    1.0 (0.00) \\
           FA ($1$ component) &     1.08 (0.17) &      0.62 (0.08) &    1.0 (0.00) \\
          PCA ($2$ components) &     1.38 (0.24) &      0.44 (0.09) &    0.9 (0.07) \\
           FA ($2$ components) &     1.44 (0.30) &      0.55 (0.09) &    1.0 (0.00) \\
          PCA ($3$ components) &     1.66 (0.23) &      0.37 (0.08) &    0.8 (0.08) \\
           FA ($3$ components) &     1.89 (0.45) &      0.44 (0.08) &    0.9 (0.07) \\
\bottomrule
\end{tabular}

    \end{small}
    \label{table:gwas-simulation-main}
    \vspace{-0.5cm}
\end{wraptable}
\vspace{-0.3cm}

%
%

A key challenge in finding causal variants is ancestry-based confounding \cite{astle2009population,vilhjalmsson2013nature}. 
Suppose that we are doing a GWAS of plants from Countries A and B; plants in Country A get more rainfall, and thus grow faster and are more nutritious. A simple linear model of $y$ and $t$ will find that any variant that is characteristic of plants in Country A (e.g., bigger leaves to capture rain) is causal for nutritional yield.


\paragraph{Methods and Baselines}
Most existing GWAS analysis methods for estimating the effect of a variant $t_j$ rely on latent variable models:  (1) they treat all remaining variants $x$ as proxies and obtain $z$ via {\em a linear projection} (e.g., PCA \cite{price2006principal,price2010new} or LMM \cite{yu2006unified,lippert2011fast}) of $x$ into a lower dimensional space where genomes from Countries A and B tend to form distinct clusters (because plants from the same country breed and are similar); (2) we assume a {\em linear model} $\beta^\top t$ of $y$ and add $z$ into it, which effectively adds the country as a feature ($z$ reveals the cluster for each country); this allows the model to {\em regress out} the effects of ancestry and assign the correct effect to variants $t$ (one at a time).

\begin{wrapfigure}{r}{4 cm}
\centering
\vspace{-0.5 cm}
\includegraphics[scale=0.35]{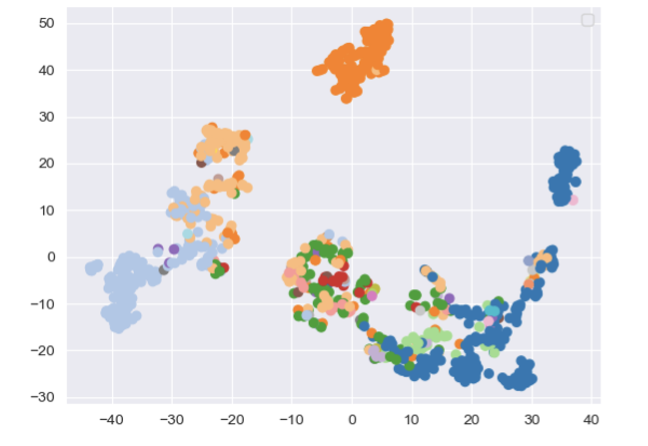}
\caption{Latent $z$ extracted by DSEs. Plants from different countries form clusters (in color).} 
\label{fig:latent_visualization}
\vspace{-0.4cm}
\end{wrapfigure}
\noindent
{\em Baselines.}
Our main baselines are Principal Component Analysis (PCA) and Linear Mixed Model (LMM), as described above and implemented via the popular LIMIX library \citep{lippert2014limix}.
We also compare against Factor Analysis (FA), a standard linear technique for deriving latent variables, Uniform Manifold Approximation and Projection (UMAP), a manifold learning technique for dimensionality reduction \cite{mcinnes2018umap} and a linear model with no correction for confounding.

\noindent
{\em DSE Models.}
We compare against deep structural equation models that leverage one or more sets of proxy variables coming from the following unstructured modalities: genomic sequences, weather time series, simulated physiological time series.
We fit DSE models via a stagewise strategy analogous to how classical GWAS models are fit: (1) we fit the component $p(z_k|x_k)$ for each proxy $x_k$; 
(2)
we fit a {\em linear model} of $y$ given $t$ and the $z$ to estimate causal effects. Thus, our $p(x_k|z)$ components are deep, while $p(y|x,z)$ is shallow (following standard assumptions on epistasis in GWAS).


\subsubsection{Simulated Human GWAS}

We used the 1000 Human Genomes \cite{Auton2015} dataset to generate a simulated multi-modal GWAS dataset, following the biologically-inspired ``Spatial'' strategy studied in \citep{tran2017implicit}.
In addition to genotypes, we generated random physiological time series by sampling Fourier series conditioned on the confounders.
Please see \cref{sec:gwas-simulation-appendix} for details. 

\cref{table:gwas-simulation-main} shows our results. Amongst the non-oracle baselines, multi-modal DSE has the smallest error in estimating causal effect, as well as the highest precision and recall at identifying causal SNPs.
The uni-modal DSE, while worse than LMM, still beats PCA and FA. 
Note that the LMM model is not compatible with multiple unstructured proxies.
In general, we see that precision starts to deteriorate faster than recall, suggesting that false positives are more likely from the weaker linear deconfounding methods such as PCA/FA.
Our results here further support that additional sources of unstructured multi-modal data can improve GWAS.

\subsubsection{Real-World Genomic Prediction and GWAS in Plants}
\begin{wraptable}{r}{8 cm}
 \vspace{-0.3cm}
\caption{Multimodal Experiments on IHDP and STAR Datasets comparing DMSE and DGSE methods. When the image modality is missing, DMSE still produces reasonable ATE errors as it can compute the latent representation on a subset of modalities better as compared to DMSE}
\label{table:IHDP-multimodal2}
\vskip 0.15in
\begin{center}
\begin{small}
\begin{tabular}{lccccr}
\toprule
Dataset & Model  & $\varepsilon_{ATE}$ (train+val) & $\varepsilon_{ATE}$  error(test) \\
\midrule
     IHDP  & DMSE  & 0.433  (0.057) & 0.627 (0.094) \\
       & DGSE  & 0.794  (0.308) & 1.080 (0.337)\\
 \midrule
 
      STAR & DMSE  & 32.575  (1.634) & 33.743 (1.890) \\
        & DGSE  & 59.102 (4.734) & 60.152 (4.788)\\
 \bottomrule
\end{tabular}
\end{small}
\end{center}
\vspace{-0.3cm}
\end{wraptable}
We also tested our methods on a real-world plant GWAS dataset from the 1001 Genomes Project for \emph{Arabidopsis Thaliana} plants ~\citep{kern_notebook, weigel20091001}. 
The challenge here is that we don't know true causal effects; 
therefore, we define a phenotype $y$ for which the true causal effect is zero---specifically, we set $y$ to the GDP of the country where each plant grows.
%
Because the genomes of plants from the same country are similar, there exist spurious correlations caused by latent subpopulation groups $z$. Our goal is to detect and correct for these confounding effects.
In addition to genomic data, we gather historical weather time series from the location of each plant (see Appendix~\ref{apdx:plant-gwas}) and use both modalities to correct for confounding.
\begin{wraptable}{l}{8 cm}
\vspace{-0.7cm}
\caption{Correcting for confounding on a plant GWAS dataset. DSEs can discover complex, non-linear clusters over genomic and weather data to identify the latent confounding variable better.}
\label{table:gwas_r2}
\vskip 0.15in
\begin{center}
\begin{small}
\begin{tabular}{lccccr}
\toprule
Model  & Input & $R^2 (\downarrow)$  \\
\midrule
        Deep Str Eqns (Ours) & Weather+SNPs & 0.049 (0.022) \\
        PCA  & Weather+SNPs & 0.097 (0.019)  \\
        UMAP  & Weather+SNPs & 0.412 (0.026)  \\
\midrule
        Deep Str Eqns (Ours) & Weather & 0.555 (0.027) \\ 
        PCA  & Weather & 0.545 (0.017)  \\ 
        UMAP  & Weather & 0.653 (0.013) \\
\midrule
        Deep Str Eqns (Ours) & SNPs & 0.068 (0.029) \\ 
        PCA  & SNPs & 0.130 (0.020)  \\
        UMAP  & SNPs & 0.406 (0.020) \\
        LMM & SNPs & 0.804 (0.009)\\
        Linear Model & - &  0.665 (0.012) \\
 \bottomrule
\end{tabular}
\vspace{-0.3cm}
\end{small}
\end{center}
\end{wraptable}

We evaluate whether each model learned
to correctly account for latent confounding effects by measuring the predictive power of $\beta^\top t$ for
$y$, where $\beta$ are the causal effects and $t$ is the vector of variants. 
Specifically, we compare the $R^2$ correlation between $\beta^\top t$ and $y$---here, {\em lower is better}, since a model that has learned the causal effects should not be predictive of the phenotype.
%
In Table~\ref{table:gwas_r2}, we see the effect of extracting confounding variables using DSEs as opposed to using the standard PCA technique. We can see that the $R^2$ values produced using DSEs are closer to $0$ as compared to using PCA. 
%
This experiment shows that neural network architectures are effective in dealing with unstructured genomic and weather data while correcting for confounding. Please refer to Appendix~\ref{apdx:plant-gwas} for details.


\subsection{Multimodal Experiments With Missing Modalities}

We demonstrate the ability of the DMSE model to handle missing data. Our IHDP and STAR benchmarks involve two modalities: images (e.g., baby's gender in IHDP) and tabular data (e.g., the remaining features). 
We compare DMSE and DGSE models on these datasets when some of the modalities may be missing. 

For DMSE, we define two different modalities $X_1$ and $X_2$ for the tabular and image modalities respectively. For DGSE, we concatenate the image embedding to the tabular modality while training the network. We evaluate ATE while randomly dropping 50\% of the images. DMSE utilizes its product-of-experts inference network to approximate the  posterior distribution when modalities are missing. DGSE cannot do this, and we resort to feeding it a vector of zeros when an image is missing. Table~\ref{table:IHDP-multimodal2} shows that DMSE produces improved ATE estimates as compared to DGSE.
\section{Related Work}


\paragraph{Multi-Modal Causal Inference}

While previous work analyzed unstructured interventions $t$ consisting of natural language \cite{pryzant2017predicting,pryzant2018interpretable,pryzant2020causal} (e.g., determining the effect of a polite vs.~a rude response) as well as unstructured $y$ \cite{batmanghelich2016probabilistic,pawlowski2020deep} (e.g., MRI images), our work proposes methods to handle unstructured $x$.
\citet{veitch2019using} developed models that correct for confounders from a single unstructured proxy $x$ derived from text \cite{egami2018make,veitch2020adapting} or a graph \cite{veitch2019using}. These approaches rely on a propensity scoring framework---they train a discriminative model of $p(t|x)$---hence do not support proxy variables or missing data, and require pre-trained text embeddings. 
Additionally, propensity scoring methods rely on neural approximators for $p(t|x)$, which may output volatile probabilistic outputs that lead to unreliable effect estimates \citep{kallus2020deepmatch}.
Our method (i) works across all modalities (beyond text or graphs), (ii) supports arbitrary numbers of proxies, (iii) supports missing data by virtue of being generative.

\paragraph{Deep Latent Variable Models}
Representation learning in causal inference has been studied by \citet{johansson2018learningrep, johansson2018learningweighted, johansson2019support} and \citet{DBLP:journals/corr/abs-2102-11107}. 
Deep latent variable models find applications throughout causal inference \cite{louizos2017causal, mayer2020missdeepcausal, pengzhou2021principled, pawlowski2020deep, zhang2020tedisentangled, vowels2020targeted, hyemi2020counterfactualfairness}.
\citet{pawlowski2020deep} study unstructured outcomes $y$ (MRI scans), but do not support proxies.
\citet{louizos2017causal} use variational auto-encoders to estimate confounders from proxies; we introduce a more structured model that handles multiple proxies that can be missing, and obviates the need for auxiliary modules. 
\citet{tran2017implicit} propose implicit deep structural equations for GWAS; ours are explicit and thus easier to train.




\section{Discussion}

\paragraph{Identifiability}
Approaches to causal effect estimation that rely on deep learning \cite{louizos2017causal, shalit2017estimating, pawlowski2020deep, shi2019adapting, mayer2020missdeepcausal, hartford2016deepcounterfactual, zhang2020tedisentangled, Yoon2018GANITEEO} can never guarantee the recovery of causal effects---neural network optimization is itself non-convex and has no guarantees. Other failure modes of deep latent variable models (DLVMs) include potentially not having a sufficiently expressive model, not having enough data to learn the model, as well as shortcomings of approximate inference algorithms. That said, there is ample evidence of both failures \cite{rissanen2021} and successes of DLVMs \cite{pengzhou2021principled, pawlowski2020deep, mayer2020missdeepcausal}. 
The DLVM approach is appealing over existing propensity scoring methods \citep{pryzant2017predicting,veitch2019using} because: (i) it naturally handles unstructured proxies that may be missing at random; (ii) it supports high-dimensional treatment variables in settings like GWAS, where propensity scoring algorithms are not easily applicable. \citet{rissanen2021} empirically identify multiple failure modes of DLVMs; our work and that of others identifies success cases (particularly in GWAS \citep{tran2017implicit,wang2019blessings}), and ultimately the validation of DLVM methods needs to be empirical \cite{shi2019adapting, Yoon2018GANITEEO, veitch2019using, hartford2016deepcounterfactual, hyemi2020counterfactualfairness}.

\paragraph{Missing Data}
We make the common assumption that data is missing at random (MAR). This poses challenges if, for example, patients missing outcomes are ones that are more likely to be sick. When two modalities and their missingness are correlated, their $x_j, x_k$ nodes in Figure \ref{fig:multimodal_structure} could be merged, somewhat addessing the issue. We leave the full exploration of non-MAR models to future work.

\section{Conclusion}

In conclusion, we proposed an approach based on deep structural equations that can leverage useful signal present in unstructured data to improve the accuracy of causal effect estimation. 
Unlike previous methods that relied on propensity scores \citep{pryzant2017predicting,veitch2019using}, ours does not suffer from instabilities caused by volatile predictive probabilities coming out of neural networks, naturally handle missing data, and are applicable in settings in which the treatment variable is high-dimensional (such as in genome-wide association studies).
Our work highlights the benefits of using large amounts of ``dark" data that were previously left unused by existing methods to improve the accuracy of causal effect estimation.

\section*{Acknowledgements}
This work was supported by Tata Consulting Services, the Cornell Initiative for Digital Agriculture, and an NSF CAREER grant ($\#2145577$).


\bibliography{references}
\bibliographystyle{neurips_style/bib_style}

\newpage
\appendix
\section{Details of the toy experiment.}
\label{apdx-toy}
\begin{enumerate}
\item \textbf{Dataset.} In this experiment, we generate synthetic dataset consisting of 5 variables: a latent binary variable $Z$, a binary variable $X^*$, an MNIST image $\mathbf{X}$, and two other binary variables $y, t$ standing for outcome and treatment respectively. 
\begin{align*}
& Z_i, X^*_i \in \{0,1\} \\
& \mathbf{X}_i \in \R^{28 * 28}   \text{~~ is an MNIST image of 0 or 1} \\
& y_i, t_i \in \{0,1\} \\
& i \in\{ 1, 2, \cdots, N\}, N = 3000
\end{align*}

These variables are sampled from the following distribution:
\begin{align*}
& P[Z = 0] = P[Z = 1] = 0.5 \\
& P[t = 1 | Z = 0]= \rho_{t0}, P[t = 1 | Z = 1] = \rho_{t1} \\
& P[X^* = 1 | Z = 0]=  \rho_{x0},P[X^* = 1 | Z = 1] = \rho_{x1} \\
& \mathbf{X} = \begin{cases}
\text{Binarized image of 0 when } X^* = 0 \\
\text{Binarized image of 1 when } X^* = 1 
\end{cases} \\
& y = t \oplus Z
\end{align*}

\item \textbf{Accounting for Confounding in ATE Computation. }

As mentioned before, the definition of the Average Treatment Effect (ATE) is as follows:
\begin{align*}
& ATE = \mathbb{E}[ITE(x)] \\
=~& \mathbb{E} \Big[ 
	\mathbb{E}[ {y} ~|~ \mathbf{X}=x, do(t = 1)]
	-
	\mathbb{E}[ {y} ~|~ \mathbf{X}=x, do(t = 0)]
\Big] 
\end{align*}
Now we consider the first term within the outer expectation:
\begin{align*}
~& \mathbb{E}[~y~ |~ \mathbf{X} = x,~ do(t = 1)] \\
=~& 1 * P[y = 1 | \mathbf{X} = x, do(t = 1)] + 0 \\
=~& \sum_z P[y = 1 | \mathbf{X} = x, do(t = 1), Z = z] \cdot P[Z = z | \mathbf{X} = x, do(t = 1)] \\
=~& \sum_z P[y = 1 | \mathbf{X} = x, t = 1, Z = z] \cdot P[Z = z | \mathbf{X} = x] \\
=~& P[y = 1 | \mathbf{X} = x,t = 1, Z = 0] \cdot P[Z = 0 | \mathbf{X} = x] + 0 \\
=~& P[Z = 0 | \mathbf{X} = x]
\end{align*}
where the second last equation is due to the fact that $y = t \oplus Z$. Similarly, we can compute the second term as
\[
\mathbb{E}[~y~ |~ \mathbf{X} = x, do(t = 0)] = P[Z = 1 | \mathbf{X} = x].
\]
Using the predefined generative process for this dataset, we can also write 
\begin{align*}
P[Z = z | \mathbf{X} = x] =~& \frac{P[Z = z] \cdot P[\mathbf{X} = x | Z = z]}{P[\mathbf{X} = x]} \\
=~& \frac{0.5 \cdot P[\mathbf{X} = x | Z = z] }{P[\mathbf{X} = x]}  \\
=~&  \frac{0.5  \cdot P[X^* = x^* | Z = z]}{P[\mathbf{X} = x]} 
\end{align*}
where $x$ is an image of binary variable $x^*$. Plugging in the previous results, we can compute the individual treatment effect (ITE):
\begin{align*}
& ITE(x) \\
=~& \mathbb{E}[y | \mathbf{X} = x, do(t = 1)] - \mathbb{E}[y | \mathbf{X} = x, do(t = 0)] \\
=~& \frac{P[Z = 0 | \mathbf{X} = x] -  P[Z = 1 | \mathbf{X} = x]}{P[\mathbf{X} = x]}  \\
=~& \frac{P[\mathbf{X} = x | Z = 0] -  P[\mathbf{X} = x | Z = 1]}{P[\mathbf{X} = x]}  \\
=~& \frac{P[X^* = x^* | Z = 0] - P[X^* = x^* | Z = 1]}{P[\mathbf{X} = x]}  \\
=~& \begin{cases} 0.5 \cdot ({\rho_{x1} - \rho_{x0}})/({P[X^* = 0]}) , \text{ if } x = \text{image of } 0 \\
0.5 \cdot (\rho_{x0} - \rho_{x1})/(P[X^* = 1]) , \text{ if } x = \text{image of } 1 \end{cases}
\end{align*}
Therefore we can plug in the previous equation and get the final result of ATE:
\begin{align*}
ATE =~&  \mathbb{E}[ITE(x)]  \\
=~& \sum_x P[\mathbf{X} = x] \cdot ITE(x) \\
=~& \sum_{x^*} P[X^* = x^*] \cdot ITE(\text{image of } x^*) \\
= ~&0.5 \cdot ((\rho_{x0} - \rho_{x1}) + (\rho_{x1} - \rho_{x0})) \\
=~& 0
\end{align*}
\item \textbf{ATE Computation with Non-Causal Model.} ATE computation goes wrong when $\mathbf{X}$ is taken to be the only confounder. In the following computation of $\mathbb{E}[~y~ |~ \mathbf{X} = x,~ do(t = 1)]$, the second step goes wrong as adjustment is done over $\mathbf{X}$ instead of $Z$.  
\begin{align*}
~& \mathbb{E}[~y~ |~ \mathbf{X} = x,~ do(t = 1)] \\
=~& 1 * P[y = 1 | \mathbf{X} = x, do(t = 1)] + 0 \\
=~& \sum_x P[y = 1 | \mathbf{X} = x, t = 1] \cdot P[\mathbf{X} = x] \\
=~& P(\mathbf{X} = \textrm{image of } 1) \cdot  P[y = 1 | \mathbf{X} = \textrm{image of } 1, t = 1] +\\
&P(\mathbf{X} = \textrm{image of } 0) \cdot  P[y = 1 | \mathbf{X} = \textrm{image of } 0, t = 1] \\
=~& P(X^* = 1) \cdot  P[y = 1 | X^* = 1, t = 1] +P(X^* = 0) \cdot  P[y = 1 |X^* = 0, t = 1] \\
=~& P(X^* = 1) \cdot  \frac{P[y = 1, X^* = 1, t = 1]}{ P[X^* = 1, t = 1]} +P(X^* = 0) \cdot \frac{P[y = 1, X^* = 0, t = 1]}{ P[X^* = 0, t = 1]}  \\
=~& P(X^* = 1) \cdot  \frac{ \sum_z(P[y = 1, X^* = 1, t = 1| Z=z]P(Z=z))}{ \sum_{z} (P[X^* = 1, t = 1| Z=z]P(Z=z))} \\
&+P(X^* = 0) \cdot \frac{\sum_{z}(P[y = 1, X^* = 0, t = 1| Z=z]P(Z=z))}{ \sum_{z} (P[X^* = 0, t = 1| Z=z]P(Z=z))}  \\
=~& P(X^* = 1) \cdot  \frac{ (P[y = 1, X^* = 1, t = 1| Z=0] \cdot 0.5)}{ \sum_{z} (P[X^* = 1, t = 1| Z=z]\cdot 0.5)} \\
&+P(X^* = 0) \cdot \frac{(P[y = 1, X^* = 0, t = 1| Z=0] \cdot 0.5 )}{ \sum_{z} (P[X^* = 0, t = 1| Z=z] \cdot 0.5)}  \\
=~& P(X^* = 1) \cdot  \frac{ \rho_{x0} \rho_{t0}}{ \rho_{x1} \rho_{t1} + \rho_{x0} \rho_{t0}} + P(X^* = 0) \cdot \frac{((1-\rho_{x0})\rho_{t0} )}{ (1-\rho_{x1}) \rho_{t1} + (1-\rho_{x0}) \rho_{t0}}
\end{align*}

Similarly, we get wrong expectation when performing intervention on variable $t$ to set it to zero.  
\begin{align*} \mathbb{E}[~y~ |~ \mathbf{X} = x,~ do(t = 0)]
~& = P(X^* = 1) \cdot  \frac{ \rho_{x1} (1-\rho_{t1})}{ \rho_{x1} (1-\rho_{t1}) + \rho_{x0} (1-\rho_{t0})} +\\
&P(X^* = 0) \cdot \frac{((1-\rho_{x1}) (1-\rho_{t1}) )}{ (1-\rho_{x1}) (1-\rho_{t1}) + (1-\rho_{x0}) (1-\rho_{t0})}
\end{align*}
  Also, 
  \begin{align*} P(X^* = 1)
~& = \sum_{z} P(X^* = 1|Z=z) P(Z=z)\\
~& = (\rho_{x1} + (1-\rho_{x0})) \cdot 0.5.\\
\end{align*}
\begin{align*}  P(X^* = 0) = 1 - P(X^* = 1) = (\rho_{x0}-\rho_{x1}+1) \cdot 0.5.\end{align*}

With the above expressions, we can check that setting $\rho_{x0} = 0.1$, $\rho_{t0}=0.2$, $ \rho_{x1}=0.3$, $\rho_{t1}=0.4$ gives a non-zero ATE when computed using non-causal methods (i.e. without accounting for hidden confounder). This corresponds to the non-causal baseline in Table~\ref{table:toyexperiment}. To get ATE of zero using non-causal baseline, we need to set $\rho_{t1} = 1- \rho_{t0}$ and $\rho_{x1} = 1 - \rho_{x0}$. 
\end{enumerate}

\section{Neural Architecture of Deep Structural Equations and Approximate Inference Networks}
\label{apdx-architecture}


\paragraph{Our Architecture}

In this section, we add the details of the DGSE and DMSE architecture that we used. $\mathbf{X}_i$ denotes an input datapoint, i.e. the feature vector (possibly containing multiple modalities), $t_i$
is the treatment assignment, $y_i$ denotes the corresponding outcome and $\mathbf{Z}_i$ is the latent hidden confounder. Within DGSE and DMSE, the latent variable is modeled as a Gaussian. For DGSE, we write (similar to \citet{louizos2017causal}):
\begin{align*}
& p[\mathbf{Z}_i] = \prod_{j = 1}^{D_z} \mathcal{N}(Z_{ij} ~|~ 0, 1)  \\
& p[t_i ~|~ \mathbf{Z}_i] = \mathsf{Bern}(\sigma(\text{NN}_1(\mathbf{Z}_i))) \\
& p[\mathbf{X}_i ~|~ \mathbf{Z}_i] = \prod_{j = 1}^{D_x} p[X_{ij} ~|~ \mathbf{Z}_i]  
\end{align*}
where $\sigma(\cdot)$ is the sigmoid function, $\mathsf{Bern}$ is the Bernoulli distribution,  $D_x, D_z$ are the dimensions of $\mathbf{X}$ and $\mathbf{Z}$ respectively, and $p[X_{ij} ~|~ \mathbf{Z}_i]$ is an appropriate probability distribution for the covariate $j$. If the treatment variable is not binary, we can modify the distribution appropriately. Within DMSE, it is possible to further factorize the distribution $p(X_i|Z_i)$ into product of distributions over component modalities owing to the conditional independence.

If the outcome $y$ is discrete, we parameterize its probability distribution as a Bernoulli distribution:
\begin{align*}
p[y_i ~|~ t_i, \mathbf{Z}_i] =~& \mathsf{Bern}\left(\pi = \hat{\pi_i} \right) \\
\hat{\pi_i} =~& \sigma\left( \text{NN}_2(\mathbf{Z}_i, t_i)\right)
\end{align*}
and if it is continuous, we parameterize its distribution as a Gaussian with a fixed variance $\hat{v}$, defined as:
\begin{align*}
p[y_i ~|~ t_i, \mathbf{Z}_i] =~& \mathcal{N}\left(\mu = \hat{\mu_i}, \sigma^2 =  \hat{v} \right) \\
\hat{\mu_i} =~& \text{NN}_2(\mathbf{Z}_i, t_i).
\end{align*}
Here each of the $\text{NN}_i(\cdot)$ is a neural network.

The posterior distribution for DGSE is approximated as
\begin{align*}
q[\mathbf{Z}_i ~|~ \mathbf{X_i}, t_i, y_i] =~& \prod_{j = 1}^{D_z} q[Z_{ij} ~|~ \mathbf{X_i}, t_i, y_i]  
=  \prod_{j = 1}^{D_z} \mathcal{N}(\mu_{ij}, \sigma_{ij}^2), 
\end{align*}
where
\begin{align*}
& \mu_{ij}, \sigma_{ij}^2 = \text{NN}_4 (\mathbf{X}_i, y_i, t_i).
\end{align*}
 For DMSE, the posterior distribution is computed differently using Product-of-Experts (PoE) \cite{wu2018multimodal} formulation, due to which it can handle missing modalities during training and inference gracefully.  

The objective of DGSE model is the variational lower bound defined as:
\begin{align*}
\mathcal{L} = & 
  \sum_{i = 1}^N
  \mathbb{E}_{q[\mathbf{Z}_i ~|~ \mathbf{X}_i, t_i, y_i] }
  \Big[
  \log p[\mathbf{Z}_i]
  +
  \log p[\mathbf{X}_i, t_i ~|~ \mathbf{Z}_i] \\
  & +
  \log p[y_i ~|~ \mathbf{Z}_i, t_i]
  -
  \log q[\mathbf{Z}_i ~|~ \mathbf{X}_i, t_i, y_i]
  \Big]
\end{align*}

The DMSE, on the other hand, requires a sub-sampled training objective to ensure that the modality specific posterior networks are trained and the relationships between individual modalities is captured. For DGSE, we also define the auxiliary encoders and the extra term in the variational lower bound following \citet{louizos2017causal}.

Auxiliary Encoders:
\begin{align*}
q[t_i ~|~ \mathbf{X}_i] =~& \mathsf{Bern}(\pi = \sigma(\text{NN}_5(t_i)))
\end{align*}
For discrete $y_i$, we have
\begin{align*}
q[y_i ~|~ t_i, \mathbf{X}_i] =~& \mathsf{Bern}\left(\pi = \hat{\pi_i} \right) \\
\hat{\pi_i} =~& \sigma (\text{NN}_6(\mathbf{X}_i, t_i)).
\end{align*}
For continuous $y_i$, we write 
\begin{align*}
p[y_i ~|~ t_i, \mathbf{X}_i] =~& \mathcal{N}\left(\mu = \bar{\mu_i}, \sigma^2 =  \hat{v} \right) \\
\hat{\mu_i} =~& \text{NN}_6(\mathbf{X}_i, t_i).
\end{align*}

This introduces the following extra term in the  variational lower bound:
\[
  \mathcal{L'} = 
  \sum_{i = 1}^N
  \log q[t_i ~|~ \mathbf{X}_i]
  +
  \log q[y_i ~|~ \mathbf{X}_i, t_i]
  \Big]
\]
DMSE does not involve these extra terms within its ELBO objective. 

Compared with \citet{louizos2017causal}, we can extend DGSE to different types of architectures for the posterior distribution $q[\mathbf{Z}_i ~|~ \mathbf{X_i}, t_i, y_i]$. When $\mathbf{X}$ is an image (e.g. medical scans, patient photos), we can use a suitable Convolutional Neural Network (CNN) architecture for extracting information effectively~\citep{lecun1995convolutional}. In our experiments with image modality, we used pretrained CLIP embeddings~\cite{li2022clip} in the first layer to extract relevant features from the images. To avoid the overwhelming difference between the image and two binary variables $t, y$, we also apply dimension reduction techniques such as Principle Component Analysis to the embeddings of the image before feeding it into the network that is shared with $t, y$. When $\mathbf{X}$ is time-series data, (e.g. text, recording), we can change the architecture to recurrent neural networks such as Long Short Term Memory \citep{hochreiter1997long}. More generally, we can choose modality specific architectures and make appropriate design choices to perform learning and inference over unstructured modalities as inputs. DMSE can handle different types and lengths of modalities gracefully and also work with missing modalities owing the specialized variational learning and inference procedures.  
\section{Comparing Our Methods with Other VAE- Based Estimators}

While our method is an instance of generative models, we identify the following key differences:

\begin{enumerate}
    
\item We propose \textbf{new generative model architectures} that extend existing models (e.g., DSE, CEVAE) to multiple proxies $X_i$, each possibly coming from a different modality.
\item  We derive \textbf{novel inference algorithms} for these extended models, which have the following benefits:
\begin{enumerate}
    \item Our algorithms scale better to large sets of modalities by leveraging the independence structure of the $X_i$.
    \item Our inference algorithms naturally handle missing $X_i$.
    \item They are also simpler: they don’t require auxiliary networks (e.g., like in CEVAE~\citep{louizos2017causal}).
\end{enumerate}

\item Lastly, our key contribution is that we demonstrate the effectiveness of generative models at \textbf{modeling unstructured proxies} (many previous methods instead relied on propensity scoring).
\end{enumerate}

Appendix~\ref{apdx:additional_cevae_comparison}  empirically shows that DMSE model compares favorably against CEVAE on synthetic datasets. 
\section{Setups used for IHDP and STAR Dataset experiments}
\label{apdx:ihdp-star-setups}
\subsection{IHDP Experiments}
The data corresponding to non-white mothers in the treated set of children is removed so that causal effect of the intervention cannot be estimated directly. The column corresponding to mother's race is removed so that this confounder cannot be obtained directly from the input. We consider 100 replicates of this dataset, where the output is simulated according to setting 'A' of NPCI package \citep{npci}. The true treatment effect is known as the simulation provides expected output values for both values of binary treatment variable. We train a DGSE model on each replicate with a 63/27/10 ratio of training, validation and test dataset size. We set the latent dimension to be 20 units and the number of hidden layers to be 2. The hidden layers have size of 20 units.

\paragraph{The IHDP-Full Setting} There are 25 input features in this experimental setting.
 We report the absolute error in ATE produced by DGSE and OLS for this setting in Table~\ref{table:IHDP-base}. 
\paragraph{The IHDP-Mini Setting} Here, we choose 9 features from the 25 input features so that removal of the feature `baby's gender' produces statistically significant treatment effect. We used mutual information and F-statistics between each of the original 25 features and the target variable $y$ to assess the importance of each feature in the initial 100 replicates of IHDP. While making sure that the absolute ATE errors don't deviate too much from the corresponding errors produced by IHDP-Full setting, we experimented with several combinations of the high ranking features to select the following 9 features in the IHDP-Mini setting. 

\begin{enumerate}
    \item Feature 6: `sex of baby'
    \item Feature 0: `birth-weight' 
    \item Feature 1: `b.head'
    \item Feature 2: `preterm'
    \item Feature 3: `birth.o'
    \item Feature 8: `mom married?'
    \item Feature 9: `mom’s education lower than high 
school?'
    \item Feature 12: `Smoked cig during pregnancy?'
    \item Feature 20: `harlem'
\end{enumerate}
Table~\ref{table:IHDP-base} shows a comparison of absolute ATE errors between the IHDP-Full setting and IHDP-Mini setting. Table~\ref{table:IHDP-multimodal1} shows the comparison of our approach with CFRNet~\cite{johansson2018learningrep} and Ordinary Least Squares (OLS) approach. OLS takes the concatenation of the covariates and treatment variable value as input to produce output. We see that the CFRNet baseline does not utilize the image effectively while OLS shows small difference between the setting where baby's gender was removed (8 attrs) and the setting where baby's gender was retained (9 attrs) setting. Hence the replacement of image (8 attrs+image) produces a small average improvement as compared to the setting where baby's gender was dropped (8 attrs). We also note that we have dropped 6 replicates from the 100 IHDP replicates under consideration. These 6 replicates showed a large degradation in ATE estimates by adding baby's gender (9 attrs) as compared to removing it (8 attrs). 

\subsection{STAR Experiments}
We `derandomize' this dataset by removing 80\% of the data corresponding to white students in the treated population. The dataset~\citep{DVN/SIWH9F_2008} has 15 input attributes. The true treatment effect can be estimated directly since the original dataset corresponds to a randomized controlled trial. Hence, it is possible to compute ATE error as the absolute difference between true ATE estimate and the ATE as predicted by the model. Similar to IHDP, we choose the following subset of attributes for the STAR experiment
\begin{enumerate}
    \item Feature 2: `Student grade'
    \item Feature 3: `Student class-type' 
    \item Feature 4: `Highest degree obtained by teacher'
    \item Feature 5: `Career ladder position of teacher'
    \item Feature 6: `Number of years of experience of teacher'
    \item Feature 7: `Teacher's race'
    \item Feature 10: `Student's gender'
    \item Feature 11: `Student's ethnicity'
 
\end{enumerate}

\begin{table}[h]
\caption{Multimodal Experiments on STAR Dataset: Removing student gender and ethnicity (6 attrs) shows increased ATE errors when compared with retaining these attributes (8 attrs), signaling that these two attributes are important for predicting treatment effect. Replacing these with image of a child shows no degradation in ATE estimation.}
\hspace{0.1cm}

\centering
\begin{tabular}{lccccr}
\toprule
Setting & $\varepsilon_{ATE}$ (Train+Val) & $\varepsilon_{ATE}$  error(Test) \\
\midrule
 Deep Str Eqns \\
\midrule
        8 attrs   & 36.479 (1.770) & 34.039 (2.336) \\
        6 attrs  & 43.682 (1.520) & 40.651 (2.425)\\
        6 attrs + image  & 35.581 (1.723) & 33.654 (2.476)\\
\midrule
CFRNet \\
\midrule
        8 attrs  & 61.835 (1.025) & 25.436 (2.332) \\
        6 attrs  & 62.055 (1.001) & 25.649 (2.339)\\
        6 attrs + image  & 61.350 (1.109) & 25.219 (2.313)\\
        
 \bottomrule

\end{tabular}

\label{table:STAR-multimodal1}
\end{table}

In Table~\ref{table:STAR-multimodal1}, we repeat the experiment 100 times and report average ATE errors along with standard error. We removed 8 repetitions in the experiment where DGSE or CFRNet showed lack of convergence as evidenced by very high validation loss on any setting of input attributes.

\begin{table}[h]
\caption{Treatment effects on IHDP Dataset. Using a reduced set of features in the IHDP Mini setting produces comparable absolute ATE errors as the degradation is small. Numbers in round braces indicate standard deviations. Since this is a simulated dataset, we can directly compute the treatment effect using the simulated factual and counterfactual outputs. ATE error is the absolute difference between true ATE and predicted ATE. 
}
\label{table:IHDP-base}
\vskip 0.15in
\begin{center}
\tabcolsep=1.5pt
\begin{small}
\begin{sc}
\begin{tabular}{lccccr}
\toprule
Models  & Input & $\varepsilon_{ATE}$ (train+val) & $\varepsilon_{ATE}$  error(test) \\
\midrule
       
        DGSE & Full  & 0.289 (0.027) & 0.358 (0.041) \\
        OLS & Full  & 0.535 (0.089) & 0.718 (0.132)\\
        DGSE & Mini  & 0.404 (0.107) & 0.720 (0.159) \\
 \bottomrule
\end{tabular}
\end{sc}
\end{small}
\end{center}
\vskip -0.1in
\end{table}

\section{Evaluating Quality of Pre-Trained Embeddings}
\label{apdx:evaluating-embeddings}
We demonstrate that our pre-trained embeddings contain useful signal by building a neural network model that predicts the gender, age and ethnicity from the CLIP embedding~\citep{li2022clip} of the corresponding image.

We build a simple neural network model that takes the CLIP embedding of an image as input and predicts the age of the person in that image. We use 5-dimensional PCA embeddings of 500 randomly chosen images of people aged 10-45 years (corresponding to the age-group in the IHDP experiment). We have an independent test dataset corresponding to 100 images chosen in a similar way. We see that the $R^2$ value for the age prediction on test dataset is 0.45. If we increase the size of PCA embedding to 50, this $R^2$ value increases to 0.58. Thus, it is possible to extract the age information from randomly chosen images using a simple neural network. 

In the above setting, we also studied the classification accuracy of separate neural networks that predict gender and ethnicity from the CLIP embeddings. We saw that gender was predicted with 94\% accuracy and ethnicity was predicted with 58\% accuracy using a 5-dimensional PCA embedding. After increasing the size of PCA embedding to 50 dimensions, the gender prediction accuracy increased to 95\% and ethnicity prediction accuracy increased to 77\%. This further supports our idea of replacing the attribute corresponding to 's baby's gender or student ethnicity/gender with an appropriate image.  
\section{Plant GWAS}
\label{apdx:plant-gwas}
\paragraph{Setup}
We apply our deep structural equations framework for correcting the effects of confounding.
We fit a DGSE model in two stages: (1) first, we only fit the model of $p(T|Z)$ using the DGSE ELBO objective; (2) then we fit $p(Y|Z,T)$ with a fixed $Z$ produced by the auxiliary model $q(Z|T)$. We found this two-step procedure to produce best results. 
The subsampled SNPs corresponding to each genome are taken as input $X$. The encoder and decoder use a single hidden layer of 256 units while a 10-dimensional latent variable $Z$ is used.  This network is optimized using ClippedAdam with learning rate of 0.01, further reduced exponentially over 20 training epochs. The confounding variable for each genome can now be computed as latent representation produced by the DSE. To measure the success of confounding correction, we compute the $R^2$ values between the true GDP of the region and the GDP output as predicted via our model and the baselines. If we have corrected for confounding, then we should get low $R^2$ values.

\paragraph{Historical Weather Data}
We used historical weather data collected from \citet{menne2012historicalclimate} to add  a new modality while performing plant GWAS. We use per day precipitation data from year 2000 collected by weather station closest in distance to the latitude/longitude coordinates of the location from which the SNPs of plant were collected.  For the locations where weather data was missing, we replaced those entries with zeros. 

\section{Simulated GWAS Experiments}\label{sec:gwas-simulation-appendix}
We provide additional details on this experiment here. 

\subsection{Data generating process}
To simulate the confounders, SNPs (genotypes), and the outcomes (phenotypes), we follow the ``Spatial'' simulation setup from Appendix D.1 \& D.2 of \citep{tran2017implicit}. Specifically, we generate random low-rank factorization of the allele frequency logits $F = \sigma^{-1}(\Gamma S)$, where $\sigma$ is sigmoid, as is common in the literature \citep{balding1995method,pritchard2000inference}. In the ``Spatial'' Setting~\citep{tran2017implicit}, the $S$ matrix is interpreted as geographic spatial positions of the individuals. For the $m$-th SNP of the $n$-th individual, we generate the SNP $X_{nm} \sim \op{Bin}(3,\sigma(F_{nm}))$. In this simulation, we considered $M=10$ SNPs with $N=10000$ individuals.

Individuals are clustered into $K=3$ groups based on their locations, and the individual's cluster is the \emph{unobserved} confounder. 
Then, the outcomes are calculated from \emph{both} the SNPs, where only $c=2$ SNPs have a non-zero causal effect, plus a confounding term that is a function of the cluster, and some i.i.d. Gaussian noise. One small deviation from \citet{tran2017implicit} is that our noise is i.i.d. Gaussian (hence, the variance does not depend on the confounder). 

We further augmented the dataset to include a time-series proxy that can help identify the unobserved spatial position of individuals.
For each cluster, we came up with some Fourier coefficients, 
\begin{align*}
    &\text{Cluster 0:  } a_0^{(0)} = 0.0, a_1^{(0)} = -1.0, a_2^{(0)} = 1.0, b_1^{(0)} = -1.0, b_2^{(0)} = 1.0
    \\&\text{Cluster 1:  } a_0^{(1)} = 1.0, a_1^{(1)} = -5.0, a_2^{(1)} = 2.0, b_1^{(1)} = -5.0, b_2^{(1)} = 2.0
    \\&\text{Cluster 2:  } a_0^{(2)} = -1.0, a_1^{(2)} = -2.0, a_2^{(2)} = 5.0, b_1^{(2)} = -2.0, b_2^{(2)} = 5.0.
\end{align*}
For each individual in the $k$-th cluster, we sampled at $N_{samples} = 50$ uniformly spaced times, across $N_{periods}=2$ periods of length $T=5$.
That is, our time series proxy consists of points $\{x_i\}_{i\in[N_{samples}]}$,
\begin{align*}
    &x_i = \frac{1}{2} a_0^{(k)} + \sum_{\ell=1}^2 a_\ell^{(k)}\cos\left( \frac{2\pi t_i(\ell-1)}{T} \right) + \sum_{\ell=1}^2 b_\ell^{(k)}\sin\left( \frac{2\pi t_i(\ell-1)}{T} \right),
    \\&t_i = \frac{T N_{periods} i}{N_{samples}}.
\end{align*}
Please see \cref{fig:visualize-clusters-gwas} for a visualization of the time-series generated for each cluster.
\begin{figure}[!h]
    \centering
    \includegraphics[width=0.6\linewidth]{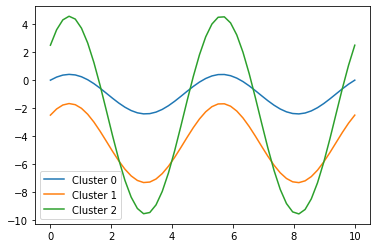}
    \caption{Visualization of Time-series Proxy.}
    \label{fig:visualize-clusters-gwas}
\end{figure}

\subsection{DSE Modeling and evaluation setup}
To handle the SNP data, we trained a Multi-Layer Perceptron (MLP) based $\beta$-variational autoencoder \cite{DBLP:conf/iclr/HigginsMPBGBML17} with ReLU activations, which output parameters of a Gaussian latent distribution. The MLPs consisted of 3 hidden layers, each with 64 units, and our latent dimension $z$ was 2. To handle time series data (new modality), we trained a $\beta$-VAE that encoded and decoded with 1D convolutions, as 1D convolutions have been shown to strongly approximate fast Fourier transforms \cite{nussbaumer65097fast}. There were 3 hidden convolutional layers, each of which had 32 output channels, with a kernel of length 3 and a stride of 1. Following the convolution was an MLP with 1 hidden layer consisting of 64 units. ReLU activations were used throughout the architecture.

In our experiments, we swept over $\beta \in \{ 0.1, 0.2, 0.5, 1.0, 1.2, 1.5 \}$. Our general hypothesis was that lower $\beta$ values would make our VAE perform better, as unlike in the standard VAE evaluation scheme, we sample from the posterior to generate latents for individuals, and not the prior, so the KL divergence term in the ELBO doesn't matter as much.

When evaluating on a set of SNPs and time series data corresponding to a set of individuals, we generate latents by passing the data of each respective type into the appropriate encoder and sampling from the resulting distributions. The latent $z$ for each individual is the concatenation of the SNP latent $z_{\textrm{snp}}$ and the time series latent $z_{\textrm{ts}}$.

Once we have the latent confounder $z$ for an individual, we calculate the causal effect of the $m^{\textrm{th}}$ SNP by performing the linear regression, 
\begin{align*}
    y = \beta_0 + \sum_i \beta_i z_i + \gamma_m x_m + \epsilon_i,
\end{align*}
where $\epsilon_i$ are i.i.d. standard normal. We solved this using the closed form solution for least-squares linear regression, given by
\begin{align*}
    \widehat{\gamma} = (A^TA)^{-1} A^T y,
\end{align*}
where $A_i = [z_i, 1, x_m]$ is the $i^{\textrm{th}}$ row in the matrix $A$. Then $\wh\gamma_m$ is our estimated causal effect for the $m^{\textrm{th}}$ SNP. We let $\bm{\wh\gamma}$ be a vector of length $M$ to denote our estimates for each SNP's causal effect.
\subsection{Results}
We tested our single modality DSE model and multi-modal DSE model against several linear latent model baselines. 
Two baselines, Principle Components Analysis (PCA) and Factor Analysis (FA), also generate a latent $z$, which we then use within our linear regression approach to calculate $\bm{\wh\gamma}$. We also ran the Linear Mixed Model (LMM) implementation by Limix~\citep{lippert2014limix}. 
We also plot two oracle baselines. The first is when $\bm{\wh\gamma} = \bm{\gamma^\star}$, labeled ``truth''. The second is when $z$ is the true confounder used when generating the data, according to ``Spatial''. 

Let $\bm{\gamma}^\star$ denote the ground truth causal effect vector. We report $\ell_1$ and $\ell_2$ norms to measure $\|\bm{\wh\gamma}-\bm{\gamma^\star}\|$.
We also report true/false positive/negatives, which we define as follows. For an individual, let $\tau = \min_{i: \gamma_i \neq 0} |\gamma_i| /2$. Then, SNP $m$ is a
\begin{enumerate}
    \item True Positive (tp): if $\wh\gamma_m \geq \tau$ and $\gamma_m^\star \geq \tau$ or $\wh\gamma_m \leq -\tau$ and $\gamma_m^\star \leq -\tau$.
    \item True Negative (tn): if $|\wh\gamma_m| < \tau$ and $|\gamma_m^\star| < \tau$.
    \item False Negative (fn): if $|\wh\gamma_m| < \tau$ but $|\gamma_m^\star| > \tau$.
    \item False Positive (fp): if none of the above hold. Concretely, there are two cases. First, if $|\wh\gamma_m| > \tau$ but $|\gamma_m^\star| < \tau$. Second, if the direction is wrong, i.e. $\wh\gamma_m > \tau$ but $\gamma_m^\star < -\tau$, or $\wh\gamma_m < -\tau$ but $\gamma_m^\star > \tau$.
\end{enumerate}
Finally, recall that precision is $tp/(tp+fp)$ and recall is $tp/(tp+fn)$, and higher is better for both.
In the following table, we show the mean and standard error of the mean (sem) over $10$ seeds at evaluation time. 
The unimodal DSE only encodes the SNPs vector into a $2$ dimensional embedding. The multimodal DSE also encodes the time-series vector into a $2$ dimensional embedding, and we concatenate this to the $2$-dimensional embedding of SNPs to form a $4$ dimensional embedding that can be used in our linear regression. $PCA_k$ denotes Principal Components Analysisn(PCA) with $k$ components. Similarly, $FA_k$ denotes Factor Analysis (FA) with $k$ components. Amongst the non-oracle baselines, multi-modal DSE has the smallest errors and highest precision/recall.  It beats all linear baselines including LMM. Unimodal DSE, while worse than LMM, still beats PCA and FA. In general, we see that precision starts to deteriorate faster than recall, suggesting that false positives are more likely from the weaker linear deconfounding techniques such as PCA/FA.
It's also interesting that when PCA/FA used more components, we see worse performance across the board.
This illustrates the importance of picking the right size of latent dimension, which is an open question. If the latent dimension is too small then we may not capture all the confounders. If the latent dimension is too large, the latent encoding may also end up capturing spurious correlations. 
\begin{table}[]
    \centering
    
\begin{small}

\begin{tabular}{lrrrrrr}

\toprule
           Model & $\ell_1 (\downarrow)$ & $\ell_2 (\downarrow)$ & tp $(\uparrow)$ & fp $(\downarrow)$ & tn $(\uparrow)$ & fn $(\downarrow)$ \\
                &     Mean (sem) &     Mean (sem) & Mean (sem) & Mean (sem) & Mean (sem) & Mean (sem) \\
\midrule
Optimal &     0.22 (0.04) & 0.09 (0.02) &  2.0 (0.00) &  0.1 (0.10) &  7.9 (0.10) &  0.0 (0.00) \\
 DSE ($2$ modalities) & 0.30 (0.06) &     0.14 (0.03) &  2.0 (0.00) &  0.2 (0.13) &  7.8 (0.13) &  0.0 (0.00) \\
            LMM &     0.44 (0.06) &     0.17 (0.02) &  2.0 (0.00) &  0.7 (0.42) &  7.3 (0.42) &  0.0 (0.00) \\
   DSE ($1$ modality) &     0.60 (0.09) &     0.26 (0.05) &  2.0 (0.00) &  1.1 (0.59) &  6.9 (0.59) &  0.0 (0.00) \\
          PCA ($1$ component) &     0.93 (0.17) &     0.40 (0.07) &  2.0 (0.00) &  2.3 (0.72) &  5.7 (0.72) &  0.0 (0.00) \\
           FA ($1$ component) &     1.08 (0.17) &     0.59 (0.13) &  2.0 (0.00) &  2.0 (0.71) &  6.0 (0.71) &  0.0 (0.00) \\
          PCA ($2$ components) &     1.38 (0.24) &     0.56 (0.09) &  1.8 (0.13) &  3.4 (0.83) &  4.6 (0.83) &  0.2 (0.13) \\
           FA ($2$ components) &     1.44 (0.30) &     0.71 (0.19) &  2.0 (0.00) &  2.6 (0.75) &  5.4 (0.75) &  0.0 (0.00) \\
          PCA ($3$ components) &     1.66 (0.23) &     0.68 (0.09) &  1.6 (0.16) &  3.8 (0.80) &  4.2 (0.80) &  0.4 (0.16) \\
           FA ($3$ components) &     1.89 (0.45) &     0.95 (0.25) &  1.8 (0.13) &  3.0 (0.56) &  5.0 (0.56) &  0.2 (0.13) \\
\bottomrule
\end{tabular}
 \end{small} 
\vspace{0.5cm}
\caption{Comparison of DSE with Baselines to Perform GWAS. }
    \label{tab:my_label}
\end{table}

\begin{table}[]
    \centering

\begin{tabular}{lrr}
\toprule
           Model & Precision $(\uparrow)$ & Recall $(\uparrow)$ \\
                &  Mean (sem) &  Mean (sem) \\
\midrule
Optimal &      0.97 (0.03) &    1.0 (0.00) \\
 DSE ($2$ modalities) &      0.93 (0.04) &    1.0 (0.00) \\
            LMM &      0.85 (0.08) &    1.0 (0.00) \\
   DSE ($1$ modality) &      0.78 (0.08) &    1.0 (0.00) \\
          PCA ($1$ component) &      0.58 (0.09) &    1.0 (0.00) \\
           FA ($1$ component) &      0.62 (0.08) &    1.0 (0.00) \\
          PCA ($2$ components) &      0.44 (0.09) &    0.9 (0.07) \\
           FA ($2$ components) &      0.55 (0.09) &    1.0 (0.00) \\
          PCA ($3$ components) &      0.37 (0.08) &    0.8 (0.08) \\
           FA ($3$ components) &      0.44 (0.08) &    0.9 (0.07) \\
\bottomrule
\end{tabular}
\vspace{0.5cm}
\caption{Comparison of DSE with Baselines in Terms of Precision and Recall Metrics. }
    \label{tab:my_label}
\end{table}
\section{Alternative Causal Graph Structures}
\label{app:alt_graphs}

This section derives estimators of ATE and ITE under assumptions on the causal graph that are different from the ones described in Figure \ref{fig:multimodal_structure}. The key takeaway of this section is that ATE and ITE can be computed under arbitrary causal graphs using minor modifications of our algorithms, which we describe in this section.

\subsection{Alternative Causal Graph Structures}

We start by formally defining the class of causal graphs that will be studied in this section. Formally, we identify two sets of modification to Figure \ref{fig:multimodal_structure}. We will derive small modifications to our algorithms to allow the new types of causal graphs implied by these modifications.

\paragraph{Observed confounders}

We consider an expanded set of graphs over a space of random variables $(X, Y, T, Z, V)$, where $V$ represents observed (non-proxy) confounders and the other random variables are associated with the observed data $x,y,t,z$. We look at causal graphs implied by structural equations of the form:
\begin{align}
\label{eqn:dmse2}
Z \sim \mathcal{P}_Z && V \sim \mathcal{P}_V &&
X_j \sim \mathcal{P}_{X_j}(\theta_{X_j}(Z)))  \;\; \forall j &&
T \sim \mathrm{Ber}(\pi_T(Z, V)) && 
Y \sim \mathcal{P}_Y(\theta_Y(Z,V,T))),
\end{align}
where $\mathcal{P}_{X_j}, \mathcal{P}_Y$ are probability distributions with a tractable density and the $\mu, \sigma, \pi, \theta$ are functions parameterized by neural networks that output the parameters of their respective probability distribution as a function of ancestor variables in the causal graph.

In the above equations, $V$ is variable that is assumed to be \textbf{always observed} (just like $y^{(i)}$ and $t^{(i)}$). All the other technical terms are defined as in the main body of the paper.
The above equations result in a causal graph with edges between $V$ and $Y,T$ and define a distribution $p(x, y, t, z, v)$.

\paragraph{Dependent proxies}

Another possible set of modifications to Figure \ref{fig:multimodal_structure} is the presence of edges between proxies $X_i, X_j$, which can be denoted as
$$
X_j \sim \mathcal{P}_{X_j}(\theta_{X_j}(Z,\text{pa}(X_j))))  \;\; \forall j
$$
where $\text{pa}(X_j))$ denotes the set of parents of $X_j$ among the other unstructured proxy variables $X_i$.


\subsection{A General Estimator Class}


The following Theorem shows that we can estimate ATE and ITE when the data distribution follows the structure in Figure \ref{fig:multimodal_structure}, plus the two types of modifications outlined above (observed confounders and dependent proxies).

\begin{theorem}
The true $\text{ITE}(x, \mathcal{M})$ for any subset $\mathcal{M} \subseteq\{1,2,...,m\}$ of observed modalities is identifiable when the true data distribution $p(x,y,t,z,v)$ has the causal graph structure of a DMSE model (Figure \ref{fig:multimodal_structure}) in addition to having observed confounders $v$ and possibly dependent confounders.
\end{theorem}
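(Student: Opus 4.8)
The plan is to follow the same do-calculus strategy used in the earlier DMSE identifiability argument, but with the backdoor adjustment set enlarged to include the observed confounder $V$. As before, it suffices to show that the interventional conditional $p(y \mid x_\mathcal{M}, v, \text{do}(t=t'))$ is identifiable for each fixed $t' \in \{0,1\}$, since the conditional ITE is the difference of this quantity evaluated at $t'=1$ and $t'=0$. Writing out the intervention and marginalizing the latent confounder, I would expand
\begin{align*}
p(y \mid x_\mathcal{M}, v, \text{do}(t=t')) = \int_z p(y \mid z, x_\mathcal{M}, v, \text{do}(t=t')) \, p(z \mid x_\mathcal{M}, v, \text{do}(t=t')) \, dz,
\end{align*}
and then simplify each factor using the graph structure of Equations \ref{eqn:dmse2}.

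The key step is to certify that $\{Z, V\}$ is a valid backdoor adjustment set for the pair $(T, Y)$ in the augmented graph. There are exactly two backdoor paths from $T$ to $Y$: the path $T \leftarrow Z \rightarrow Y$ through the latent confounder and the path $T \leftarrow V \rightarrow Y$ through the observed confounder, and conditioning on $\{Z, V\}$ blocks both. I would then verify that the two structural modifications introduce no further open paths. For the observed confounder $V$: since $V$ points only into $T$ and $Y$, conditioning on the always-observed $v$ blocks the second path and activates no collider. For dependent proxies: the modified equation leaves each $X_j$ a non-ancestor of $T$ and $Y$, because its only outgoing influence is to other proxies, never to $T$ or $Y$; hence the proxies remain dead ends with respect to $Y$, and conditioning on an arbitrary observed subset $x_\mathcal{M}$ opens no collider on any $T$--$Y$ path. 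This reduces the two factors above to $p(y \mid z, v, t')$ (because $Y$'s mechanism depends only on $(Z,V,T)$, so $Y \perp X_\mathcal{M} \mid Z,V,T$ and, for $Y$, the intervention coincides with conditioning) and to $p(z \mid x_\mathcal{M}, v)$ (because $Z$, $V$, and the $X_j$ are all non-descendants of $T$, so cutting the arrows into $T$ leaves their joint law unchanged, by Rule 3 of do-calculus).

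Putting these together yields
\begin{align*}
p(y \mid x_\mathcal{M}, v, \text{do}(t=t')) = \int_z p(y \mid z, v, t') \, p(z \mid x_\mathcal{M}, v) \, dz,
\end{align*}
in which every factor on the right-hand side is a functional of the joint $p(x,y,t,z,v)$ that the DMSE model represents; hence the quantity is identifiable, and taking the difference over $t' \in \{0,1\}$ identifies $\text{ITE}(x,\mathcal{M})$. The main obstacle I anticipate is the careful d-separation bookkeeping for the dependent-proxy case: I must argue that adding edges among the $X_j$ and conditioning on only the observed subset $x_\mathcal{M}$, while the complementary proxies remain latent, never turns a proxy into a collider lying on a backdoor path between $T$ and $Y$. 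This follows from the fact that no proxy is an ancestor of $T$ or $Y$, but stating it cleanly requires spelling out that the extra edges keep the proxy subgraph acyclic and downstream of $Z$, so that the adjustment formula is genuinely unchanged from the base DMSE case.
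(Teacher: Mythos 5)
Your proposal is correct and follows essentially the same route as the paper's proof: decompose $p(y \mid x_\mathcal{M}, v, \text{do}(t=t'))$ by marginalizing over the latent $z$, then remove the do-operator via backdoor adjustment with $\{Z,V\}$ as the adjustment set, concluding that every factor on the right-hand side is identifiable. The only difference is that you spell out the d-separation bookkeeping (valid adjustment set, Rule 3 for the $z$ factor, and the check that dependent proxies remain non-ancestors of $T$ and $Y$) that the paper compresses into a one-line appeal to do-calculus.
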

\begin{proof}
Let $x_\mathcal{M} = \{x_j \mid j \in \mathcal{M}\}$ be the data from the observed subset of modalities. Let $v$ be the observed proxy variable.
We need to show that $p(y | x_\mathcal{M}, v, \text{do}(t=t'))$ is identifiable for any $t'$. Observe that
\begin{align*}
   p(y | x_\mathcal{M}, v, \text{do}(t=t')) 
   & = \int_{z} p (y | z, x_\mathcal{M}, v, do(t=t')) p (z | x_\mathcal{M}, v, do(t=t')) dz \\
   & = \int_{z} p (y | z, x_\mathcal{M}, v, t') p (z | x_\mathcal{M}) dz, 
\end{align*}
where the second equality follows from the rule of do-calculus (applying backdoor adjustment).
Since our proof holds for any $t'$ and all elements on the right-hand side are identifiable, the claim follows.
\end{proof}

\subsection{Observed Confounders}
\label{observed_confounders}
Next, we derive an extension of the DMSE model to the setting in which we have observed confounders $V$. We refer to this modified model as \textbf{DMSE-V}.

As earlier, the \textbf{DMSE-V} model induces a tractable joint density $p(X,Y,T,Z, V)$, which allows us to fit its parameters
using stochastic variational inference by optimizing the evidence lower bound (ELBO) on the marginal log-likelihood $p(y^{(i)},x^{(i)},t^{(i)},v^{(i)})$ defined over an expanded dataset $\{y^{(i)},x^{(i)},t^{(i)},v^{(i)}\}_{i=1}^n$:
\begin{align} 
\text{ELBO}_X & = \sum_{i=1}^n \mathbb{E}_{q} \left[\sum_{j=1}^m\log p(x_j^{(i)}|z) + \log p(y^{(i)},t^{(i)},v^{(i)},z) - \log q(z|x^{(i)},y^{(i)},t^{(i)},v^{(i)}) \right],
\end{align}
where $p(y^{(i)},t^{(i)},v^{(i)},z) = p(y^{(i)} | t^{(i)},v^{(i)},z) p(t^{(i)}|z,v^{(i)}) p(z) p(v^{(i)})$
and $q(z|x^{(i)}, y^{(i)}, t^{(i)},v^{(i)})$ is the approximate variational posterior. 
Note that since $v^{(i)}$ \textit{is always observed} (just like $y^{(i)}$ and $t^{(i)}$), the $p(v^{(i)})$ term can be ignored.

In practice, this reduces to the vanilla DMSE model with the following modifications:
\begin{itemize}
\item The $\log p(y^{(i)} | t^{(i)},v^{(i)},z)$ term becomes additionally conditioned on $v^{(i)}$.
\item The $\log p(t^{(i)} | v^{(i)},z)$ term becomes additionally conditioned on $v^{(i)}$.
\item The approximate posterior $q(z|x^{(i)}, y^{(i)}, t^{(i)},v^{(i)})$ becomes additionally conditioned on $v^{(i)}$.
\end{itemize}
Crucially, the specialized inference algorithms derived for the DMSE model remain unchanged. Since $V$ is always observed, learning a \textbf{DMSE-V} model is equivalent to learning a model $p(y,x,t,z|v)$, which has the same structure as a DMSE model. In particular, all the $X_i$ are conditionally independent given $Z$. Hence, the same learning and inference algorithms apply.

\subsection{Causal Links Among Proxies}
\label{causal_links}
Another type of causal graph that we consider is one in which proxies $X_i$ are connect by causal edges. First, we note that when proxies are unstructured, such causal edges are expected to be rare, i.e., we do not expect the pixels of an image $X_i$ to have a direct influence on other variables.

When $X_i$ takes on a structured form and directly influences other proxies, our strategy is to ``collapse" any sets of variables $X_j$ that have edges among them, until we have the conditionally independent structure in Figure \ref{fig:multimodal_structure}. In the extreme case, we might need to collapse all proxies $X_i$ into a single proxy $X$ that consists of their concatenation.

The result is a model that has the same structure as DMSE, and that can be learned using the same set of inference and learning algorithms. The only drawback is increased computational efficiency.
\subsection{Additional Synthetic Data Experiment Details}

We generate synthetic data to simulate a process with modified causal links as covered in subsection~\ref{observed_confounders} and~\ref{causal_links}. Specifically, we simulate four different datasets with generative process detailed in the following section. We generate $\{X_i, t_i, y_i\}_{i=0}^{i=m}$ where $m=10000$. We use train/val/test split of 63/27/10. Here, $\oplus$ stands for logical XOR. 

\begin{enumerate}

\item \textbf{Case of Original Causal Graph (Dataset A)}

Variables: $X1$ is unstructured input, $X2$ is structured input, $T$ is treatment, $Y$ is output, $Z$ is confounder 
    
Edges in graph: $Z\rightarrow X1, Z\rightarrow X2$,  $\{ Z\}\rightarrow T, \{T, Z\} \rightarrow Y$

Thus we added no extra edges.

Generative process:
\begin{enumerate}
    \item $P(z=1)=0.5$
 \item $P(x1’=1|z=1)=P(x1’=0|z=0)=0.1$  ($x1’$ is an intermediate variable)
 \item $P(x2=1|z=1)=P(x2=0|z=0)=0.2$  
 \item $P(t=1|z=1)=P(t=0|z=0) =0.2$
 \item $y=(z \oplus t)$
 \item $P(x1|x1’ = 1)$ is unif. over MNIST images of ‘1’,  $P(x1|x1’ = 0)$ is unif. over MNIST images of ‘0’
 \end{enumerate}
 

    \item \textbf{Case of the Observed Confounder (Dataset B)}
    
    Variables: $X1$ is unstructured input, $X2$ is structured input, $T$ is treatment, $Y$ is output, $Z$ is confounder 
    
Edges in graph: $Z\rightarrow X1, \{ X2, Z\}\rightarrow T, \{X2, T, Z\}\rightarrow Y$

Thus we added extra edges $X2 \rightarrow T$ and $X2\rightarrow Y$.

Generative process:
\begin{enumerate}
    \item $P(z=1)=0.5, P(x2=1)=0.5$
 \item $P(x1’=1|z=1)=P(x1’=0|z=0)=0.1$  ($x1’$ is an intermediate variable)
 \item $P(t=1|z=1, x2=1)=P(t=0|z=0, x2=1) =0.2;$
 
 $P(t=1|z=1, x2=0)=P(t=0|z=0, x2=0) =0.9$
 \item $y=x2 \textrm{ AND } (z \oplus t)$
 \item $P(x1|x1’ = 1)$ is unif. over MNIST images of ‘1’,  $P(x1|x1’ = 0)$ is unif. over MNIST images of ‘0’
 \end{enumerate}
 

 \item \textbf{Case of the Observed Confounder (Dataset C)}
 
 Variables: $X1$ is unstructured input, $X2$ is structured input, $T$ is treatment, $Y$ is output, confounder $Z = (Z1, Z2)$
    
Edges in graph: $Z\rightarrow X1, Z \rightarrow X1, Z-> X2, \{ X2, Z\}\rightarrow T, \{X2, T, Z\}\rightarrow Y$

Thus we added extra edges $Z\rightarrow X2, X2 \rightarrow T$ and $X2\rightarrow Y$.

Generative process:
\begin{enumerate}
    \item $P(z1=1)=0.5, P(z2=1)=0.5$
    \item $P(x1’=1|z1=1)=P(x1’=0|z1=0)=0.1$  ($x1’$ is an intermediate variable)
    \item $P(x2=1|z2=1)=P(x2=0|z2=0)=0.9$
    \item $z = z_1 \oplus z_2$
    \item $P(t=1|z=1, x2=1)=P(t=0|z=0, x2=1) =0.9;$
    $P(t=1|z=1, x2=0)=P(t=0|z=0, x2=0) =0.1$
 \item $y=x2 \oplus t \oplus (z1 \oplus z2)$
 \item $P(x1|x1’ = 1)$ is unif. over MNIST images of ‘1’,  $P(x1|x1’ = 0)$ is unif. over MNIST images of ‘0’
\end{enumerate}


\item \textbf{Case of Causal Links Among Proxies (Dataset D)}

Variables: $X1$ is structured input, $X2$ is structured input, $X3$ is unstructured input, $T$ is treatment, $Y$ is output, confounder $Z = (Z1, Z2, Z3)$ 

Edges in graph:$ Z \rightarrow X1,Z \rightarrow X2, Z \rightarrow X3, X1 \rightarrow X2, 
\{Z\} \rightarrow T, \{T, Z\} \rightarrow Y$

Thus we added extra edge: $X1 \rightarrow X2$

Data generation: 
\begin{enumerate}
    \item $P(z1=1)= P(z2=1)= P(z3=1)= 0.5$
    \item $P(x1=1|z1=1)= P(x1=0|z1=0)=0.1$
    \item $P(x2=1|z2=1, x1=0)= P(x2=0|z2=0, x1=0)=0.8$
    
    $P(x2=1|z2=1, x1=1)=P(x2=0|z2=0, x1=1)=0.2$
    \item $P(x3’=1|z3=1)= P(x3’=0|z3=0)=0.3$   ($x3’$ is an intermediate variable)
    \item $z =z_1 \oplus z_2 \oplus z_3$ 
    \item $P(t=1|z=1)=P(t=0|z=0) =0.2$
    \item $y=t \oplus z$
    \item $P(x3|x3’ = 1)$ is unif. over MNIST images of ‘1’,  $P(x3|x3’ = 0)$ is unif. over MNIST images of ‘0’

\end{enumerate}

\item \textbf{Case of Increasing Number of Proxies (Dataset E)}

Variables:  $\{X_1, X_2,.. X_m\}$ are unstructured inputs, $T$ is treatment, $Y$ is output, confounder $Z = (Z_1, Z_2,..,Z_m)$. $m$ is number of modalities 
Edges in graph: $Z_i \rightarrow X_i,  Z \rightarrow T, \{T, Z\} \rightarrow Y$

\begin{enumerate}
    \item $P(z_i=1) = \frac{i}{m}$   
    \item $P(x_i’=1|z_i=1)=P(x_i’=0|z_i=0)=1$ ($x_i’$ are  intermediate variables)
    \item $z =\oplus_{i=1}^{m} z_i$    (xor over all $z_i$)
    \item $P(t=1|z=1)=P(t=0|z=0) =0.25$
    \item $P(y=1|t=1) = \textrm{sigmoid}(3z+2), P(y=1|t=0) = \textrm{sigmoid}(3z-2)$
    \item $P(x_i|x_i’ = 1)$ is unif. over MNIST images of ‘1’,  $P(x_i|x_i’ = 0)$ is unif. over MNIST images of ‘0’
\end{enumerate}

\end{enumerate}
\begin{table}[h]
\caption{Comparison of DMSE and CEVAE with increasing number of proxies}
\label{table:increasing-modalities}
\vskip 0.15in
\begin{center}
\tabcolsep=1.5pt
\begin{sc}
\resizebox{\textwidth}{!}{\begin{tabular}{l|cc|cc|cr}
\toprule
Number of & CEVAE	&&	DMSE	&&\multicolumn{2}{c}{\% improvement made by}
\\
input modalities & &&	&& \multicolumn{2}{c}{DMSE w.r.t CEVAE}\\
\midrule
 &$\varepsilon_{ATE}$ 
 &  $\varepsilon_{ATE}$  & 
 $\varepsilon_{ATE}$  &
 $\varepsilon_{ATE}$  & 
 $\varepsilon_{ATE}$  &
 $\varepsilon_{ATE}$   \\
 & (train+val) &  (test)&  (train+val) &  (test)& (train+val) &  (test) \\
\midrule
5 & 	0.0533 (0.0165) & 	0.0663 (0.0244) & 	0.0421 (0.0045)	 & 0.0472 (0.0166) & 	21.0\% & 	28.8\%\\
10 & 	0.0381 (0.0122)	 & 0.0425 (0.0148) & 	0.0296 (0.0040)	 & 0.0334 (0.0052) & 	22.5\% & 	21.5\%\\
15 & 	0.0465 (0.0062)	 & 0.0545 (0.0112) & 	0.0350 (0.0066)	 & 0.0408 (0.0011) & 	24.7\%	 & 25.1\%\\
20 & 	0.0764 (0.0178)	 & 0.0738 (0.0164) & 	0.0407 (0.0087)	 & 0.0383 (0.0054) & 	46.6\%	 & 48.1\%\\
 \bottomrule
\end{tabular}}
\end{sc}
\end{center}
\vskip -0.1in
\end{table}

\begin{table}[h]
\caption{Comparison of DMSE with CEVAE on one modality}
\label{table:one-modality-comparison}
\vskip 0.15in
\begin{center}
\tabcolsep=1.5pt
\begin{small}
\begin{sc}
{\begin{tabular}{ccc}
\toprule
Model &	ATE error &	ATE error\\
&(train+val) &	(test)\\
\midrule
 CEVAE	& 0.0637 (0.0162) &	0.0641 (0.0178)\\
DMSE &	0.0328 (0.0040) &	0.0333 (0.0045)\\
 \bottomrule
\end{tabular}}
\end{sc}
\end{small}
\end{center}
\vskip -0.1in
\end{table}

\begin{table}[h]
\caption{Comparison of CEVAE and DMSE under alternative graph structures}
\label{table:alt-graphs-comp}
\vskip 0.15in
\begin{center}
\tabcolsep=1.5pt
\begin{sc}
\resizebox{\textwidth}{!}
{\begin{tabular}{l|cc|cr}
\toprule
Causal Graph & 	\multicolumn{2}{c}{CEVAE} &		\multicolumn{2}{c}{DMSE}\\
&ATE error &	ATE error &	ATE error &	ATE error \\
&(train+val)&	(test)&	(train+val) &	(test)\\
\midrule
Dataset A: & 	0.0636 (0.0244) & 	0.0752 (0.0276) & 	0.0335 (0.0127) & 	0.0303 (0.0140)\\
Original causal graph& & & & \\
Dataset B:  & 0.0522 (0.0134)	 & 0.0498 (0.0148)	 & 0.0152 (0.0038) & 	0.0237 (0.0049)\\
Some inputs are observed confounders& & & & \\
Dataset C:  & 0.0591 (0.0145) & 	0.0671 (0.0180)	 & 0.0315 (0.0055) & 	0.0328 (0.0113)\\
Some inputs are observed confounders& & & & \\
Dataset D:  & 	0.0375 (0.0141)	& 0.0539 (0.0075) & 	0.0096 (0.0022) & 	0.029 (0.0058)\\
Some input proxies are& & & & \\
not conditionally independent& & & & \\
 \bottomrule
\end{tabular}}
\end{sc}
\end{center}
\vskip -0.1in
\end{table}

\begin{table}[h]
\caption{DMSE under alternative graph structures}
\label{table:alt-graphs}
\vskip 0.15in
\begin{center}
\tabcolsep=1.5pt
\begin{small}
\begin{sc}
\resizebox{\textwidth}{!}
{\begin{tabular}{lccr}
\toprule
Causal Graph & 	Training Dataset Size & 	ATE error (train+val) & 	ATE error (test)\\
\midrule
Dataset A:  & 	100	 & 0.1966 (0.0389) & 	0.3590 (0.0564)\\

Original causal graph & 1000 & 	0.0575 (0.0103) & 	0.1155 (0.0220)\\
 & 10000 & 	0.0335 (0.0127) & 	0.0303 (0.0140)\\
 & 25000 & 	0.0274 (0.0055)	 & 0.0292 (0.0058)\\
 \midrule
Dataset B: & 	100 & 	0.1280 (0.0184) & 	0.2770 (0.0401)\\
 Some inputs& 1000 & 	0.0320 (0.0099)	 & 0.0951 (0.0264)\\
are observed confounders & 10000 & 	0.0152 (0.0038) & 	0.0237 (0.0049)\\
 & 25000 & 	0.0195 (0.0058) & 	0.0204 (0.0063)\\
 \midrule
Dataset C:  & 100 & 	0.1354 (0.0361) & 	0.1970 (0.0561)\\
Some inputs  & 1000 & 	0.0596 (0.0191) & 	0.1060 (0.0214)\\
 are observed confounders	& 10000 & 	0.0315 (0.0055)	 & 0.0328 (0.0113)\\
 & 25000 & 	0.0140 (0.0039) & 	0.0226 (0.0072)\\
 \midrule
Dataset D:  & 	100 & 	0.1391 (0.0209) & 	0.2300 (0.0544)\\
  Some input proxies& 1000 & 	0.0596 (0.0103)	 & 0.1012 (0.0223)\\
 are not conditionally & 10000 & 	0.0096 (0.0022)	 & 0.0290 (0.0058)\\
 independent& 25000 & 	0.0139 (0.0033)	 & 0.0206 (0.0045)\\
 \bottomrule
\end{tabular}}
\end{sc}
\end{small}
\end{center}
\vskip -0.1in
\end{table}

\subsubsection{Experimental Results}
\label{apdx:additional_cevae_comparison}
\textbf{DMSE under alternative graph structures and comparison with CEVAE}:We generate synthetic data corresponding to Datasets A, B, C and D. Table~\ref{table:alt-graphs} demonstrates that DMSE recovers ATE under modified causal graph structures. We demonstrate that with increasing dataset size, the ATE error on test dataset continues to fall. Thus, with dataset size approaching infinity, we can recover the true ATE as long as the model class contains true distribution and our optimizer can find the minimum. In Table \ref{table:alt-graphs-comp}, DMSE also compares favorably with CEVAE and recovers the ATE in this extended setting. In this extended setting, we have a combination of structured and unstructured input modalities. CEVAE takes a concatenation of these modalities as its input while DMSE has a separate model and inference network for each modality. Hence DMSE can handle diverse modality types gracefully as compared to CEVAE.

\textbf{DMSE under increasing number of modalities and comparison with CEVAE}:
We generate synthetic data corresponding to Dataset E with varying number of modalities (i.e. proxies).  Table~\ref{table:one-modality-comparison} contains the results of experiment with just one modality; the key difference between the two models is the inference procedure. Table~\ref{table:one-modality-comparison} shows that even in this setting, DMSE recovers the ATE more accurately than CEVAE. Next, we compare CEVAE vs. DMSE when the data has many unstructured modalities in Table~\ref{table:increasing-modalities}. We generate synthetic data from K modalities (Dataset E). The CEVAE model treats them as one concatenated vector; DMSE models them as separate vectors. As expected, DMSE handles large numbers of modalities better than CEVAE.

\section{Identifiability of Causal Effects in the Presence of Proxies to Hidden Confounder}
\label{app:theory}

If the structural equations have linear dependencies, we can establish the identifiability of total treatment effect. \citet{kuroki2014measurement} define the total treatment effect of $X$ on $Y$ as `the total sum of the products of the path coefficients on the sequence of arrows along all directed paths from X to Y'. 
We extend the identifiability result of~\citet{kuroki2014measurement} to handle vector-valued confounders $\bar U$, and hence we require one additional view of the confounder for a total of three independent views.
Concretely, we consider the following setup.

$\bar{U}$: hidden confounder

$X$: binary treatment variable

$Y$: univariate outcome variable

$\bar{W}, \bar{Z}, \bar{V}$: proxies for $\bar{U}$

Assume causal graph as 

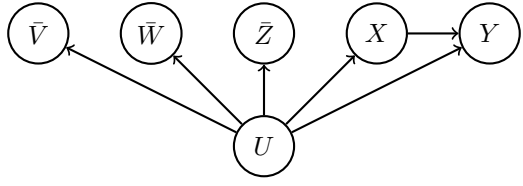
\begin{figure}[!h]
    \centering
    \begin{tikzpicture}[thick, node distance={15mm}, minimum size=0.8cm, main/.style = {draw, circle}] 
    \node[main] (1) {$\bar V$};
    \node[main] (2) [right of=1] {$\bar W$};
    \node[main] (3) [right of=2] {$\bar Z$};
    \node[main] (4) [right of=3] {$X$};
    \node[main] (6) [right of=4] {$Y$};
    \node[main] (5) [below of=3]{$U$}; 
    \draw [->] (5) -- (1);
    \draw [->] (5) -- (2);
    \draw [->] (5) -- (3);
    \draw [->] (5) -- (4);
    \draw [->] (4) -- (6);
    \draw [->] (5) -- (6);
    \end{tikzpicture} 
     \caption{Causal graph for Theorem~\ref{thm:linear_sem}.}
    \label{fig:linear_sem}
\end{figure}

Assume linear structural equations

$\bar{W} = \beta_{\bar{W}\bar{U}} \bar{U} + \bar{\epsilon}_W$

$\bar{Z} = \beta_{\bar{Z}\bar{U}} \bar{U} + \bar{\epsilon}_Z$

$\bar{V} = \beta_{\bar{V}\bar{U}} \bar{U} + \bar{\epsilon}_V$

$X = \beta_{X\bar{U}} \bar{U} + \epsilon_X$

$Y = \beta_{Y\bar{U}} \bar{U} + \beta_{YX} X + \epsilon_Y$

Note that the three views of $\bar U$ are independent given $\bar U$, i.e. $\bar{W} \perp \bar{Z} \perp \bar{V} | \bar{U}$.

\begin{theorem}\label{thm:linear_sem}
  Given the above setup, the causal effect of $X$ on $Y$, i.e. $P(y|\textrm{do}(x))$, is identifiable if the dependency matrices $\beta_{\diamond \bar U}$ for $\diamond \in \{ \bar V, \bar W, \bar Z\}$ have rank $|\bar U|$.
\end{theorem}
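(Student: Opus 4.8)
The plan is to reduce the claim to identifying a single scalar structural coefficient and then to recover it from the observable covariance matrix using the three proxies. Since the only directed path from $X$ to $Y$ in Figure~\ref{fig:linear_sem} is the edge $X\to Y$, the total effect of $X$ on $Y$ in the sense of \citet{kuroki2014measurement} is exactly the coefficient $\beta_{YX}$, and because the SEM is linear with exogenous noise, $\P(y\mid\textrm{do}(x))$ is Gaussian and determined by $\beta_{YX}$ together with a few elementary moments. So first I would show that identifying $\beta_{YX}$ suffices: under $\textrm{do}(X=x)$ we delete the equation for $X$, leaving $Y=\beta_{YX}x+\beta_{Y\bar U}\bar U+\epsilon_Y$ with the marginal of $\bar U$ unchanged, so the interventional mean and variance are fixed once $\beta_{YX}$ and the observable scalars $\E[X],\E[Y],\Var(X),\Var(Y),\mathrm{Cov}(X,Y)$ are known.

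Next I would write $\beta_{YX}$ as the population partial-regression coefficient of $Y$ on $X$ controlling for $\bar U$. Writing $\Sigma=\mathrm{Cov}(\bar U)$, $s_X=\mathrm{Cov}(\bar U,X)$, $s_Y=\mathrm{Cov}(\bar U,Y)$, the normal equations and a Schur complement give
\[
\beta_{YX}=\frac{\mathrm{Cov}(X,Y)-s_X^\top\Sigma^{-1}s_Y}{\Var(X)-s_X^\top\Sigma^{-1}s_X}.
\]
The denominator equals $\Var(\epsilon_X)>0$, and both quadratic forms $s_X^\top\Sigma^{-1}s_Y$ and $s_X^\top\Sigma^{-1}s_X$ are invariant under the unidentifiable latent reparametrization $\bar U\mapsto T\bar U$, so it suffices to identify these two scalars from observables.

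The heart of the argument is recovering these confounding quadratic forms from the three independent views. Abbreviating $A=\beta_{\bar W\bar U}$, $B=\beta_{\bar Z\bar U}$, $D=\beta_{\bar V\bar U}$, the conditional independence $\bar W\perp\bar Z\perp\bar V\mid\bar U$ makes every cross-block covariance noise-free, so $\mathrm{Cov}(\bar W,\bar Z)=A\Sigma B^\top$, $\mathrm{Cov}(\bar W,\bar V)=A\Sigma D^\top$, $\mathrm{Cov}(\bar Z,\bar V)=B\Sigma D^\top$, while $\mathrm{Cov}(\bar W,X)=As_X$ and $\mathrm{Cov}(\bar W,Y)=As_Y$. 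Using that the rank assumption makes $A,B,D$ full column rank, I would reconstruct the noise-free signal covariance $S_W:=A\Sigma A^\top$ purely from observables via
\[
S_W=\mathrm{Cov}(\bar W,\bar V)\,\mathrm{Cov}(\bar Z,\bar V)^{+}\,\mathrm{Cov}(\bar Z,\bar W),
\]
which follows from the pseudoinverse identity $\mathrm{Cov}(\bar W,\bar V)\,\mathrm{Cov}(\bar Z,\bar V)^{+}=A B^{+}$ (the factors $\Sigma$ and $D$ cancel). Full column rank of $A$ then yields $A^\top S_W^{+}A=\Sigma^{-1}$, so that $\mathrm{Cov}(\bar W,X)^\top S_W^{+}\mathrm{Cov}(\bar W,Y)=s_X^\top\Sigma^{-1}s_Y$ and likewise $\mathrm{Cov}(\bar W,X)^\top S_W^{+}\mathrm{Cov}(\bar W,X)=s_X^\top\Sigma^{-1}s_X$. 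Substituting into the Schur-complement formula expresses $\beta_{YX}$ entirely in terms of observable second moments, and the remaining mean and variance terms of $\P(y\mid\textrm{do}(x))$ follow by elementary moment identities.

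The main obstacle is this three-view reconstruction step and its pseudoinverse algebra: it is precisely here that the rank-$|\bar U|$ hypothesis is used, and it is the reason three proxies rather than two are required—two views yield only the noise-free but asymmetric product $A\Sigma B^\top$, from which $\Sigma$ cannot be isolated in the $A$-frame, whereas the third view supplies the pseudoinverse factor needed to reassemble the symmetric signal covariance $A\Sigma A^\top$ without ever observing $\bar U$. Care must also be taken to verify that the pseudoinverse identities hold under full column rank, that the denominator $\Var(\epsilon_X)$ is nonzero, and that the final expression is invariant to the choice of latent frame $T$.
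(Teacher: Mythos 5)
Your proposal is correct and follows essentially the same route as the paper's proof: both arguments reconstruct the noise-free signal covariance $\beta_{\bar W\bar U}\Sigma_{\bar U\bar U}\beta_{\bar W\bar U}^\top$ from the three pairwise cross-covariances among $\bar V,\bar W,\bar Z$ (your $S_W=\mathrm{Cov}(\bar W,\bar V)\,\mathrm{Cov}(\bar Z,\bar V)^{+}\,\mathrm{Cov}(\bar Z,\bar W)$ is exactly the paper's $\Sigma_{\bar W\bar Z}\Sigma_{\bar V\bar Z}^{-1}\Sigma_{\bar V\bar W}$) and then substitute it into the Kuroki--Pearl backdoor Schur-complement formula for the treatment effect. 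The only substantive differences are refinements in your favor: your pseudoinverse identities handle rectangular full-column-rank $\beta$ matrices directly, where the paper's main derivation assumes square invertible matrices and treats $|\bar W|>|\bar U|$ via a separate row-transformation reduction, and you make explicit both the reduction of identifying $P(y|\textrm{do}(x))$ to identifying $\beta_{YX}$ and the positivity of the denominator ($=\Var(\epsilon_X)$), which the paper leaves implicit.
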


\newcommand{\BlackBox}{\rule{1.5ex}{1.5ex}}
\newenvironment{proofsketch}{\par\noindent{\emph{Proof Sketch}\ }}{\hfill\BlackBox\\[2mm]}

\begin{proofsketch}
First, we analyze what happens when we know the $\beta$ matrices a priori, i.e. treatment effect estimation with external information.
Then, we analyze how to leverage the three independent views of the confounder to recover enough information about the matrices for treatment effect estimation, i.e. treatment effect estimation without external information. In this second step, we use the rank constraint so that each proxy has enough information about the confounder. 

\textbf{Treatment effect estimation with external information}:

If any one $\beta$ matrix corresponding to a proxy is known (i.e. we know the causal mechanism $\bar{U} \rightarrow \bar{W}$, $\bar{U} \rightarrow \bar{Z}$ or $\bar{U} \rightarrow \bar{V}$), then the treatment effect is identifiable given that the $\beta$ matrix is invertible. In principle, we need any one proxy and the $\beta$ matrix corresponding to it for computing the treatment effect. The following derivation is done using the matrix $\beta_{\bar{W}\bar{U}}$.

We define $\sigma_{AB} = \textrm{Covariance}(A, B)$ where $A$ and $B$ are univariate. We define $\Sigma_{\bar{A}\bar{B}}$ to be the covariance matrix between sets of variables in the vectors $\bar{A}$ and $\bar{B}$. 

For the above causal graph Figure~\ref{fig:linear_sem} with linear dependencies, the total treatment effect $\tau_{YX}$ of variable $X$ on output $Y$ can be derived using backdoor adjustment~\cite{kuroki2014measurement} as

$\tau_{YX} = \frac{\sigma_{XY} - \Sigma_{X\bar{U}} \Sigma^{-1}_{\bar{U} \bar{U}} \Sigma_{Y\bar{U}}^T}{\sigma_{XX} - \Sigma_{X\bar{U}} \Sigma^{-1}_{\bar{U}\bar{U}} \Sigma_{X\bar{U}}^T}. $


Using formula of covariance and owing to the linear nature of structural equations, we know that

$\Sigma_{X\bar{W}} = \Sigma_{X \bar{U}} \beta_{\bar{W}\bar{U}}^T ,  \Sigma_{Y\bar{W}} =  \Sigma_{Y\bar{U}} \beta_{\bar{W}\bar{U}}^T.$

Thus, if the matrix $\beta_{\bar{W}\bar{U}}$ is a square and invertible, we get  

$\tau_{YX} = \frac{\sigma_{XY} - \Sigma_{X\bar{W}} (\beta_{\bar{W}\bar{U}}\Sigma_{\bar{U} \bar{U}}\beta_{\bar{W}\bar{U}}^T)^{-1}\Sigma_{Y\bar{W}}^T}{\sigma_{XX} - \Sigma_{X\bar{W}} (\beta_{\bar{W}\bar{U}}\Sigma_{\bar{U} \bar{U}}\beta_{\bar{W}\bar{U}}^T)^{-1}\Sigma_{X\bar{W}}^T}. $

\textbf{Note}: \label{Note} If the matrix $\beta_{\bar{W}\bar{U}}$ is not square, it needs to have rank $|\bar{U}|$ at the least. In case $|\bar{W}| > |\bar{U}|$, but rank($\beta_{\bar{W}\bar{U}}$) = $|\bar{U}|$, we can still work with a modified system of equations such that $\bar{W}$ is replaced with $\bar{W'}$ where $|\bar{W'}| = |\bar{U}| = \textrm{rank}(\beta_{\bar{W'}\bar{U}})$. To obtain this modified system, we apply elementary row transformations on the equation $\bar{W} = \beta_{\bar{W}\bar{U}} \bar{U} + \bar{\epsilon}_W$ such that $\beta_{\bar{W}\bar{U}}$ is an upper triangular matrix. After that, we drop the bottom $|\bar{W}|-|\bar{U}|$ rows from matrices on both sides of equation to obtain $\bar{W'}$ and  $\beta_{\bar{W'}\bar{U}}$.

Consider the case where rank($\beta_{\bar{W}\bar{U}}$) $<$ $|\bar{U}|$ and  rank($\beta_{\bar{Z}\bar{U}}$) $<$ $|\bar{U}|$. In this case, \textbf{if} we can concatenate the two proxies $\bar{W}$ and $\bar{Z}$ to a new proxy $\bar{V} = \bar{W}:\bar{Z}$ such that rank($\beta_{\bar{V}\bar{U}}$) = $|\bar{U}|$, we can estimate treatment effects using $\beta_{\bar{V}\bar{U}}$.

\textbf{Treatment effect estimation without external information}:

In this case, we need three independent views of hidden confounder U. As we are working with multivariate confounders, the univariate treatment variable X cannot serve as the third view of U anymore.

Now using the properties of covariance matrix and the linearity of structural equations, we can write the following 

$\Sigma_{\bar{V}\bar{W}} = \beta_{\bar{V}\bar{U}} \Sigma_{\bar{U}\bar{U}}\beta_{\bar{W}\bar{U}}^T$, $\Sigma_{\bar{W}\bar{Z}} = \beta_{\bar{W}\bar{U}} \Sigma_{\bar{U}\bar{U}}\beta_{\bar{Z}\bar{U}}^T $, $\Sigma_{\bar{V}\bar{Z}} = \beta_{\bar{V}\bar{U}} \Sigma_{\bar{U}\bar{U}}\beta_{\bar{Z}\bar{U}}^T $.

Assuming that the matrices $\Sigma_{\bar{U}\bar{U}}, \beta_{\bar{V}\bar{U}}, \beta_{\bar{Z}\bar{U}}$ are square and invertible, we get $  \Sigma_{\bar{W}\bar{Z}} \Sigma_{\bar{V}\bar{Z}}^{-1}  \Sigma_{\bar{V}\bar{W}} = \beta_{\bar{W}\bar{U}} \Sigma_{\bar{U}\bar{U}} \beta_{\bar{W}\bar{U}}^T$

Hence, if $\beta_{\bar{W}\bar{U}}$ is invertible additionally, then the treatment effect can be identified using the following equation

$\tau_{YX} = \frac{\sigma_{XY} - \Sigma_{X\bar{W}} (\Sigma_{\bar{W}\bar{Z}} \Sigma_{\bar{V}\bar{Z}}^{-1}  \Sigma_{\bar{V}\bar{W}})^{-1}\Sigma_{Y\bar{W}}^T}{\sigma_{XX} - \Sigma_{X\bar{W}} (\Sigma_{\bar{W}\bar{Z}} \Sigma_{\bar{V}\bar{Z}}^{-1}  \Sigma_{\bar{V}\bar{W}})^{-1}\Sigma_{X\bar{W}}^T}$

\end{proofsketch}

\textbf{What happens when the dimensionality of proxies is greater than dimensionality of true confounder?}

We are interested in high-dimensional, unstructured proxies. In this case, our matrices $\beta_{\bar{V}\bar{U}}, \beta_{\bar{Z}\bar{U}}, \beta_{\bar{W}\bar{U}}$ will need to have rank $|\bar{U}|$. In essence, we still need three views for our vector $\bar{U}$. When we have access to external information in the form of $\beta$ matrices, verifying the rank is possible. We can then apply appropriate row transformations  on the corresponding structural equations (as mentioned in~\ref{Note}) to obtain modified proxies with same length as the confounder $\bar{U}$. 

In absence of external information, rank of $\beta$ matrices is not verifiable. Assuming that we have $\beta_{\bar{V}\bar{U}}, \beta_{\bar{Z}\bar{U}}, \beta_{\bar{W}\bar{U}}$ with rank $|\bar{U}|$, we apply a dimensionality reduction procedure to map the proxies $\bar{V}, \bar{Z}, \bar{W}$ to modified proxies $\bar{V'}, \bar{Z'}, \bar{W'}$ such that $|\bar{V'}|= |\bar{Z'}|= |\bar{W'}| = |\bar{U}|$ and $\beta_{\bar{V'}\bar{U}}, \beta_{\bar{Z'}\bar{U}}, \beta_{\bar{W'}\bar{U}}$ have rank $|\bar{U}|$. In practice, this dimensionality reduction can be a technique like PCA or the result of applying pre-trained neural network layer to obtain an embedding of unstructured data. 
\section{Additional Details on Mathematical Proofs}

\subsection{Evidence Lower Bound for Deep Structural Equations}
We discuss the evidence lower bound (ELBO), presented in Section~\ref{subsection:approx_inference}.

The left-hand-side of the ELBO can be written as
\begin{align}
    \displaystyle\sum_{i=1}^n \mathbb{E}_q \left[ \displaystyle\sum_{j=1}^m \log p(x_i^j | z) + \log p(y_i | z) + \log p(y_i, t_i, z) - \log q(z | x_i, t_i, y_i) \right].
\end{align}
Due to the causal graph structure we can factorize the following distribution as
\begin{align}
    p(y_i, t_i, z) = \log p(y_i | t_i, z) + \log p(t_i | z) + \log p(z).  
\end{align}
Thus, we can rewrite (10) as
\begin{align}
    \displaystyle\sum_{i=1}^n \mathbb{E}_q \left[ \displaystyle\sum_{j=1}^m \log p(x_i^j | z) + \log p(y_i | z) + \log p(y_i | t_i, z) + \log p(t_i | z) + \log p(z) - \log q(z | x_i, t_i, y_i) \right]
\end{align}
Note that the first four terms in the expectation over the posterior distribution $q$ make up a reconstruction loss of the original data $\boldsymbol{x}, y, t$ in terms of the latent variable $z$, while the last two terms form the negative KL divergence $-D_{KL} \left( q(z | x_i, t_i, y_i) \lVert p(z) \right)$ between the posterior $q$ and the prior $p(z)$. Thus, we can write our ELBO as
\begin{align}
    \displaystyle\sum_{i=1}^n \mathcal{L}_{\textrm{reconstruction}}(x_i, y_i, t_i, z) - D_{KL} \left( q(z | x_i, t_i, y_i) \lVert p(z) \right),
\end{align}
which is exactly our variational objective. We can sum or take the average over the number of datapoints $n$ to form our empirical objective.

\subsection{Derivation of posterior}
Here we show why the true posterior factorizes as $$p(z\mid x,t,y) \propto (p(z\mid t,y) \prod_{j=1}^m p(z\mid x_j)) / \prod_{j=1}^{m-1}p(z).$$ 
\begin{align*}
    p(z\mid x,t,y) 
    &= p(x,t,y,z)/p(x,t,y)
    \\&= \left(\prod_{j=1}^m p(x_j\mid z)\right) p(t, y\mid z) p(z) /p(x, t, y) \tag{by cond. indep. of causal graph}
    \\&= \left( \prod_{j=1}^m p(z\mid x_j) p(x_j) / p(z) \right) \cdot \left(p(z\mid t, y) p(t, y) p(z) / p(x, t, y)\right)
    \\&\propto \left( \prod_{j=1}^m p(z\mid x_j) / p(z) \right) \cdot \left(p(z\mid t, y) p(z) \right). \tag{removing terms independent of $z$}
\end{align*}

\end{document}